\documentclass{article}

\usepackage{microtype}
\usepackage{graphicx}
\usepackage{subcaption}
\usepackage{booktabs} % for professional tables

\usepackage[preprint]{icml2026}

\usepackage{mathtools}

\usepackage{multirow}
\usepackage{xcolor}
\usepackage{subcaption}
\usepackage{times}  
\usepackage{helvet} 
\usepackage{courier}
\usepackage{graphicx}
\usepackage{wrapfig}
\usepackage{stfloats}
\usepackage{subcaption}
\usepackage{wrapfig}
\usepackage{amsmath,amsfonts,amssymb,amsthm}
\usepackage{thmtools,thm-restate}
\usepackage{booktabs}
\usepackage{dsfont}
\usepackage{makecell}
\usepackage{xurl}
\usepackage{booktabs}
\usepackage{tabularx}
\usepackage{array}
\usepackage{hyperref}

\hypersetup{
pdftitle={Robust Strategic Classification under Decision-Dependent Cost Uncertainty}
}

\newcommand\unknown{\omega}
\newcommand\unknownset{\Omega}

\theoremstyle{plain}

\theoremstyle{definition}

\theoremstyle{remark}

\icmltitlerunning{Robust Strategic Classification under Decision-Dependent Cost Uncertainty}

\begin{document}

\twocolumn[
  \icmltitle{Robust Strategic Classification under Decision-Dependent Cost Uncertainty}
  \icmlsetsymbol{equal}{*}

  \begin{icmlauthorlist}
    \icmlauthor{Sura Alhanouti}{1,2}
    \icmlauthor{G\"{u}zin Bayraksan}{1}
    \icmlauthor{Parinaz Naghizadeh}{3}
  \end{icmlauthorlist}

  \icmlaffiliation{1}{Department of Integrated Systems Engineering, The Ohio State University, Ohio, USA}
  \icmlaffiliation{2}{Department of Industrial Engineering, Jordan University of Science and Technology, Irbid, Jordan}
  \icmlaffiliation{3}{Department of Electrical and Computer Engineering,  University of California, San Diego, USA}

  \icmlcorrespondingauthor{Sura Alhanouti}{alhanouti.1@osu.edu}
  \icmlcorrespondingauthor{G\"{u}zin Bayraksan}{bayraksan.1@osu.edu}
  \icmlcorrespondingauthor{Parinaz Naghizadeh}{parinaz@ucsd.edu}

  \icmlkeywords{Strategic Machine Learning, Robust optimization, Decision-dependent uncertainty.}

  \vskip 0.3in
]

\printAffiliationsAndNotice{} 

\begin{abstract}
Humans facing algorithmic decision systems have been found to ``game'' them by altering their input data (at a cost to them) in order to favorably change the algorithmic outcomes they receive (at a cost to the algorithm). The growing literature on strategic classification seeks to develop robust machine learning algorithms that account for, and reduce, unwanted strategic behavior. A limitation of these existing works is that they assume the cost of strategic behavior to be fixed and independent of the classifier's decision. In practice, however, manipulation costs evolve and depend on past algorithmic decisions: today's decisions influence tomorrow's costs. This paper proposes and analyzes a two-stage robust optimization framework with a decision-dependent uncertainty set to capture such dependencies. We highlight that awareness of policy-dependent costs not only reduces uncertainty, but also better curtails gaming of the algorithmic system over time. 
\end{abstract}

\section{Introduction}\label{sec:intro}
Machine learning algorithms are increasingly deployed in decision making systems, including in financial lending, job hiring, recidivism prediction, and school admission decisions. A key issue arising in these systems is \emph{strategic behavior} by humans, who modify their inputs to the algorithm (e.g., by misrepresenting their features), in a way that allows them to gain a favorable outcome but does not necessarily align with the system's goal of making accurate predictions \citep{Newsmedical2025, NYTimesCitiBike2024, APfakereview2024, forbes-resume-screening, Solman2017, mohlmann2017hands}. This has motivated a growing line of works on strategic machine learning, which study the design of algorithms that are robust to such ``gaming'' of the system. 

The majority of this existing literature adopts a limiting assumption: both the decision maker and the agents have full information about the algorithmic system and the strategic recourse options, including which features can be manipulated and at what cost.  
Recent work has started introducing uncertainty into this problem by considering agents with incomplete or biased perceptions of the classifier \citep{cohen2025bayesian, ebrahimi2024double, haghtalab2023calibrated} or classifiers uncertain about agents' costs \citep{rosenfeld2023one, shao2023strategic,ahmadi2021strategic}. 

However, a type of uncertainty that arises in practice, but remains unaddressed by these works, is that classifier decisions that aim to curb agents' gaming can \emph{endogenously} shape cost uncertainty \emph{over time}. Specifically, the criteria set by the present classifier influence the possible set of costs that the agents face when strategically responding to the algorithmic system in the future. For instance, test-optional policies started by colleges during the COVID-19 pandemic reduced demand and prices for test preparation in subsequent years, while simultaneously shifting spending toward alternative costly signals such as essay coaching and extracurricular activities \citep{inklings2025,theatlantic2024, k12dive2023,prepscholar2022}. Thus, a school's decisions this year shape future applicants' strategic response costs; although, crucially, the exact impact is not fully known at the time of decision-making. Similar effects appear in other contexts such as lending and hiring \citep{gillis2025_algorithmwhisperers, bjorkegren2020manipulation,bambauer2018algorithm}, wherein the criteria adopted for approval have been found to shape the options and costs of strategic gaming available to future applicants. These motivate the need to not only address strategic manipulation, but to do so with an eye towards its uncertain temporal effects. 

This paper is, to our knowledge, the first to formalize such \textit{time-evolving and decision-dependent uncertainty} in strategic classification. We formulate the classifier selection as a two-stage robust optimization (TSRO) problem. In the first stage, the classifier anticipates that its choices will influence future response costs by the agents; we model this through a decision-dependent uncertainty set for the second-stage problem. Throughout, we provide support for different elements of our proposed model using university admissions as a motivating example, including by highlighting universities' awareness of how their admissions' criteria impose long-term costs and shape strategic behavior. 

Following the problem formulation, we note that solving the resulting TSRO remains challenging. Specifically, our second stage problem is nonlinear, and the uncertainty appears not only in the constraint set, but also in the objective through a dual-norm term, causing the dual feasible set to vary with each realization and preventing the reuse of dual extreme points or valid cut generation to solve the problem. We address this by proposing problem-specific approximations and reformulations. Specifically, we propose an approximation of the cost-induced norms that appear in the firm's per-stage objective (which we show has an added benefit in the context of strategic classification, by enhancing robustness against harmful manipulations), linearize the objective function, 
and use McCormick envelopes to remove decision-dependence in the second-stage problem's objective function. Following these, the relaxed problem becomes amenable to some existing solution methods for linear TSROs, including the commonly used Benders decomposition and Column-and-Constraint Generation (C\&CG) algorithms \cite{qiu2024two, zeng2022two,zhang2022two,chen2022robust}, which we employ. We validate and bound the suboptimality gap of our approximations numerically by comparing the solution obtained from the relaxed problem to the original TSRO's optimal solution obtained via brute-force search; see Section~\ref{sec:7_NE}. 

We then proceed to using numerical experiments (on both synthetic and semi-synthetic data) to assess the impacts of our method's awareness of time-evolving and decision-dependence uncertainties. Our semi-synthetic experiments are built on real-world datasets on college admission statistics and costs~\citep{nacac_factors2025,crimson2025goodSAT,aspen2025sports1, park2024extracurricular,collegeinvestor2024satprep, collegeboard2023sat}. Our main findings here are two-fold. First, we illustrate that compared to a dependency-unaware baseline, our dependency-aware classifier strategically sacrifices some first-stage performance to gain second-stage robustness; this is a consequence of moving from myopic to temporally-aware decisions. Second, and less expected, we show that our method reduces overall loss not only due to its awareness of temporal effects and robustness to uncertainty, but by changing agents' costs in a way that limits strategic manipulation. In other words, our experiments show that accounting for time-evolving, decision-dependent costs allows a decision maker to not only minimize its uncertainty, but also \emph{shape strategic responses}, highlighting a different (and previously unexplored) 
lever for mitigating gaming of algorithmic systems over time. 
\section{Related Work}\label{sec:related_work} 
Most of the works studying humans' strategic responses to AI/ML systems assume full information for both classifiers and agents and explore the implications of agents manipulating inputs without changing their true qualifications  \citep{hardt_strategic_2016,milli2019social,levanon2022generalized,sundaram2023pac}, genuinely improving their qualifications \citep{ahmadi2023setting,tsirtsis2024optimal}, or choosing between manipulation and improvement  \citep{wang2023algorithmic,horowitz2023causal,haghtalab_maximizing_2021,harris2021stateful,bechavod2021gaming,alhanouti2024could,efthymiou2025incentivizing,xie2024learning}. 

More recent works examine \emph{agents having incomplete information about the algorithm}. \citet{braverman2020role}, \citet{geary2025strategic}, and \citet{sundaram2023pac} propose randomized classifiers to obscure decision boundaries. \citet{cohen2025bayesian} and \citet{harris2022bayesian} 
explore strategic communication. \citet{ghalme2021strategic} and \citet{bechavod2022information} study opacity and the fairness implications of limited information about the algorithm. 

A separate line of work, similar to ours, handles \emph{firms' uncertainty about agent responses}. These include learning under unknown manipulations \citep{shao2023strategic,lechner2023strategic,cohen2024learnability}, unknown costs or preferences \citep{ahmadi2021strategic,dong2018strategic}, or distributional shifts \citep{he2025decision}. Some models address robustness to uncertain manipulation costs using worst-case optimization or distributionally robust methods \citep{rosenfeld2023one,tang2021linear}.

Our work differs from, and complements, these existing works by considering a new form of incomplete information: a firm's uncertainty over \emph{endogenously evolving manipulation costs}, where future gaming costs depend on the classifier's current decisions. Our proposed form of information asymmetry is supported by existing studies showing how decision rules endogenously shape manipulation incentives; e.g., lax oversight or platform design choices can reduce manipulation costs and increase gaming \citep{jacob2003catching,alcobendas2022adjustment}. 
The idea of classifiers \emph{endogenously} impacting costs in strategic classification has been recently explored by \citet{sommer2025learning}, who model the classifier's impact on the market demand for feature-altering services. This model captures \emph{immediate} endogenous effects, 
assuming the classifier anticipates the \emph{current} price equilibrium. 
Our work differs in two key respects: first, it models \emph{future} costs as being endogenously impacted by current decisions; and second, it accounts for the \emph{uncertainty in this effect}, which is critical in the identified application domains. To the best of our knowledge, this is the first work that formally models and analyzes the consequences of such \emph{uncertain} and \emph{endogenously evolving} dependencies in algorithmic classification. 

\section{Problem Setting and Preliminaries}\label{sec:model}

We study a problem of strategic binary classification, where a firm makes accept/reject decisions on agents with observable features $x \in \mathcal{X} \subseteq \mathbb{R}^d$ and unobservable true labels $y\in\mathcal{Y}=\{\pm1\}$. We let $(X, Y) \sim P_{XY}$, where $P_{XY}$ denotes the joint distribution over $\mathcal{X} \times \mathcal{Y}$. A lowercase $(x,y)$ denotes a realization of $(X,Y)$. The classification problem unfolds over two stages, with a long-lived firm and short-lived agents (i.e., the same firm makes decisions over both stages, but for a different set of agents each time). 

\paragraph{The first stage.} At the first stage, the firm selects and announces a linear binary classifier $\operatorname{sign}(\beta^T x)$, where the classifier weights $\beta$ may be restricted to some set $\mathcal{B}$. First-stage agents can strategically respond to classifier $\beta$ by altering their original features $x$ in a way that leads to them being admitted by the classifier, without altering their true labels. Note that this type of strategic response includes both harmless manipulations (e.g., hiring a tutor to learn SAT test tricks) and harmful manipulations (e.g., cheating on the SATs), though in both cases we assume the true qualification state (e.g., ability to graduate) remains unchanged. Formally, a starting feature $x$ will be changed to
\begin{align}\hat{x}(
\beta) := \arg\max_{\hat{x}\in\mathcal{X}} [\mathds{1}(\beta^T \hat{x} \geq 0) u - c(x, \hat{x})],
\label{eq:strategic-response}
\end{align}
where $u \geq 0$ is the utility gained from positive classification, and $c(x, \hat{x})$ denotes the cost of the strategic response. Note that an agent opts for a strategic response only if the change shifts the classification from negative to positive and the cost of achieving this satisfies $c(x, \hat{x}) < u$. Otherwise, the agent takes no action. 
\paragraph{The cost of strategic responses.}  We consider cost functions of the form $c(x, \hat{x}) = \phi(\|\hat{x} - x\|_{\Sigma})$, where $\phi: \mathbb{R}_{\geq 0} \to \mathbb{R}_{\geq 0}$ is a non-decreasing function and 
\begin{align}
\|\hat{x} - x\|_{\Sigma} := \|\Sigma^{\frac{1}{2}} (\hat{x} - x)\|,
\label{eq:norm-definition}
\end{align}
where $\Sigma \in \mathbb{R}^{d \times d}$ is a positive definite (PD) \emph{cost matrix} ($\Sigma \succ 0$) which uniquely parametrizes the cost function, and $\|\cdot\|$ is the standard $p$-norm with $p \geq 1$. This cost function was proposed by \citet{rosenfeld2023one}. We note that setting $\Sigma=I_{d\times d}$ recovers the previously studied $\ell_p$-norm costs as a special case \citep{cohen2025bayesian,trachtenberg2025strategic,efthymiou2025incentivizing}, including the widely used $\ell_2$-norm \citep{haghtalab_maximizing_2021,horowitz2023causal,ebrahimi2024double}.  

For additional intuition about this function, note that if $\Sigma$ is diagonal, i.e., $\Sigma = \operatorname{diag}(\sigma_1, \sigma_2, \dots, \sigma_d)$, then $\sigma_i$ controls the relative cost of modifying feature $x_i$, with larger $\sigma_i$ indicating higher cost. If $\Sigma$ has off-diagonal entries, it encodes correlations in feature change costs: changing feature $x_i$ may influence the cost of changing another feature $x_j$. 

We assume that in the first stage, the exact cost of strategic responses $c(x,\hat{x})$ by the first-stage agent population is known to the firm, and is parameterized by a cost matrix $\Sigma_0$. The firm can use this knowledge to anticipate the impact of agents' strategic responses when it is choosing its classifier $\beta$. This stage, in isolation, aligns with the classic strategic classification problem studied in prior works (e.g.,  \citet{cohen2025bayesian,haghtalab_maximizing_2021,ahmadi2021strategic}), but with a more general cost function.

\paragraph{The second stage.} Our model differs from existing strategic classification frameworks in its second stage and its coupling with the first stage. Crucially, the firm anticipates that first-stage decisions affect the cost of second-stage strategic responses, although the impact is uncertain. 

Formally, we let the second-stage cost for changing features from $x$ to $\hat{x}$ be given by
\begin{align}\label{eq:realized_SS_cost}
    c(x,\hat{x}) = \phi \left(\|\hat{x}-x\|_{\Sigma(\unknown)}\right).
\end{align}
That is, the second stage has an uncertain cost matrix $\Sigma(\unknown) \in \mathbb{R}^{d \times d}$. This matrix is assumed to be a function of a random vector $\unknown\in\mathbb{R}^d_+$ whose components will capture how the second-stage cost depends on the first-stage decision $\beta$.\looseness-1

Specifically, we define a \emph{decision-dependent uncertainty set} $\unknownset(\beta)$, from which the random vector $\unknown$ will be drawn: 
\begin{align}
    \quad \unknown \in \unknownset(\beta).
\label{eq:uncertainty-set}
\end{align}
During the first stage, the firm's uncertainty about the second stage is parametrized by this set, containing all possible realizations of the cost-driving vector $\unknown$ that could arise given the first-stage decision $\beta$. (We formally detail the firm's model of this uncertainty set in Section~\ref{sec:uncertainty-set-model}.) 

At the beginning of the second stage, a realization $\unknown$ is drawn from this set, which in turn realizes the cost matrix for the second stage according to
\begin{align}
\Sigma(\unknown) := Q\big(g(\unknown)\big) \cdot \Sigma_0,
\label{eq:second-stage-matrix}
\end{align}
where $\Sigma_0 \in \mathbb{R}^{d\times d}$ is the first-stage cost matrix, and $Q(g(\cdot)) \in \mathbb{R}^{d \times d}$ is a matrix-valued transformation that maps the element-wise scaling factors $g(\unknown_j)$ into a full cost-scaling matrix. Each component $\unknown_i$ controls the scaling of the cost for feature $x_i$ and may also influence the cost of other features $x_j$ with $j \neq i$. To keep the structure tractable, we require $Q(g(\unknown))$ to be \emph{component-wise separable} in $\unknown$, meaning each entry is a function of a single $\unknown_i$ and contains no component product transformations. This ensures $\unknown$ influences the cost geometry only through independent scaling determined by $g(\unknown)$. Each scaling function $g(\cdot)$ is itself assumed to be bounded and strictly positive (i.e., $0 < g(\cdot) < \infty$), ensuring that $\Sigma(\unknown) \succ 0$, and with $g(\unknown)^{-\frac{1}{2}}$ linear in $\unknown$. (See Appendix~\ref{app:reformulation} for motivation.) 

{While our proposed algorithm and analytical results will be applicable to any PD matrix $\Sigma_0$ and functions $Q(\cdot)$ and $g(\cdot)$ satisfying the conditions stated above, identifying such functions is non-trivial, and particularly in practice, real-world cost evolution functions may not meet such admissible forms. As a result, the applicability of our framework will require approximations to match these admissible criteria. Below, we identify one class of PD matrices $\Sigma_0$ and functions $Q(\cdot)$ and $g(\cdot)$ satisfying the conditions stated above, which can be used as a basis for approximating cost evolution functions learned from real-world historical data to fit our proposed framework.} 

\emph{Example (special case):} Assume $Q(g(\unknown))$ and $\Sigma_0$ are both diagonal. This means that $\sigma_i$, the first-stage unit cost of altering feature $x_i$, will change to $g(\unknown_i)\sigma_i$ in the second-stage. Specifically, choosing $g(\unknown)=\unknown^{-2}$ will lead to the second-stage unit costs $\unknown_i^{-2}\sigma_i$ scaling inversely with the (squared) realization of the uncertainty parameter. We will use this special case in our numerical experiments in Section~\ref{sec:7_NE}. We also provide additional experiments on off-diagonal instances in Appendix~\ref{app:off-diag-cost-mat}. 

In summary, the second-stage cost depends on the first-stage decision $\beta$ through the following chain of mappings:
\[\beta \mapsto\unknownset(\beta) \ni \unknown \mapsto \Sigma(\unknown) \mapsto \|\cdot\|_{\Sigma(\unknown)}
\mapsto c(x,\hat{x}).\]
Note that once the second stage begins, a linear classifier $\operatorname{sign}(\beta'^T x)$ is chosen from some set $\mathcal{B'}$, \emph{after} the realization of the decision-dependent costs (i.e., second-stage agents' costs becomes known to the firm at the second stage). It is only the choice of the first-stage classifier $\beta$ that has to be made to restrict gaming in the first stage \emph{while at the same time} anticipating its uncertain impacts on the agents' costs in the second stage. Before formalizing the firm’s problem, we provide a motivating example for our proposed model.

\paragraph{Motivating example: university admissions.}  
Empirical evidence suggests that shifting to test-optional policies alters the market for preparatory services (such as tutoring for standardized tests) while increasing emphasis on other components like extracurricular activities, which in turn affects the costs of modifying each application component for students \citep{k12dive2023, inklings2025}. In this context, the firm (here, a university) can observe the current costs students incur to meet that year's admission criteria; this aligns with our assumption that the costs are known to the firm at the time of decision-making. The school also understands that its decisions during the admission season this year (i.e., the weight placed on different application components) will shape the strategic cost landscape for applicants in the following year; this is because this year's applicants use services such as  Naviance or websites such as CollegeData Admissions Tracker to learn about the overall application profiles of last year's admitted students. Although the exact nature of next year's costs is uncertain, the school can incorporate this decision-dependent uncertainty into its current admission policies to better anticipate long-term effects. 

This awareness is not merely theoretical. Elite institutions have explicitly acknowledged that their admissions' criteria impose long-term costs and shape strategic behavior. For instance, students often report arriving academically exhausted due to the widespread belief that excessive AP coursework is necessary for admission \citep{FitzsimmonsAPMIT}.  
These reflections, echoed in the ``Turning the Tide'' report \citep{TurningTheTide2016}, illustrate institutional awareness that current admission signals can fuel competitive overextension and inequitable cost burdens on future applicants. This institutional perspective is reinforced by empirical evidence: \citet{dearden2018} finds that admission policies directly influence high school students' academic effort, with schools emphasizing particular criteria, such as GPA or extracurricular achievement, inducing students to reallocate effort and preparation to align with those priorities.\looseness-1 
\section{The Firm's Optimization Problem}\label{sec:firm-problem}

To capture the two-stage decision-making process outlined in the previous section, we formulate the firm's decision process as a two-stage robust optimization problem with a decision-dependent uncertainty set. We formalize this problem throughout this section. 

\subsection{The learning objective}\label{sec:problem_learning_objective}
First, consider the learning objective of a single-stage binary classification problem, with a known and fixed strategic cost function. Recall first that agents can opt to strategically respond to a classifier $\beta$ by altering their features from $x$ to $\hat{x}(\beta)$, as shown in \eqref{eq:strategic-response}. Let $\delta_{\Sigma}(x; \beta) := \hat{x}(\beta) - x$ denote the agent's feature change vector in response to classifier $\beta$, when the cost function $c(x, \hat{x})$ is parametrized by a cost matrix $\Sigma$ as shown in \eqref{eq:norm-definition}. Note also that $\delta_{\Sigma}(x; \beta)$ can be an all-zeros vector if the agent takes no action. 

The firm's learning objective is to choose a classifier $\beta$ to minimize the expected 0-1 loss while anticipating such strategic responses. Formally, the 0-1 loss incurred on a given strategic agent $(x,y)$ facing cost matrix $\Sigma$ is 
\begin{align}
\ell_{\Sigma}(\beta^\top x, y) := \mathds{1}\{\operatorname{sign}(\beta^\top (x + \delta_\Sigma(x; \beta))) \neq y\}.
\label{eq:default-0-1-loss}
\end{align}
Efficient optimization of the (expected) loss in \eqref{eq:default-0-1-loss} is hindered by two issues: non-convexity from the $\operatorname{sign}$ function and discontinuities in $\delta_\Sigma(x;\beta)$. The first admits standard surrogate losses (e.g., hinge), and the second has been tackled by prior work \citep{levanon2022generalized, rosenfeld2023one} using a \emph{cost-aware strategic hinge loss}: 
\begin{align}\label{eq:RR_strageic_hinge_loss}
   \ell_{\Sigma, \text{s-hinge}}(\beta^\top x,y) := \left(1-y(\beta^\top x + u_*\|\beta\|_{*,\Sigma})\right)_+, 
\end{align} 
where $\left( a\right)_+ := \max\{0,a\}$, the quantity $u_*$ denotes the largest value satisfying $\phi(u_*) \leq u$, and
$\|\beta\|_{*,\Sigma} := \sup_{\|v\|_{\Sigma}=1} \beta^\top v = \|\Sigma^{-\frac{1}{2}} \beta\|_*$
is the \emph{$\Sigma$-transformed dual norm} of $\beta$, with $\|\cdot\|_*$ denoting the dual norm. 

We will adopt this cost-aware strategic hinge loss as the learning function of the firm within both first and second stages, where we set the matrix $\Sigma$ in \eqref{eq:RR_strageic_hinge_loss} to $\Sigma_0$ in the first stage, and to $\Sigma(\unknown)$ in the second stage. 
More specifically, we define the empirical risk of a classifier 
$\beta$ when facing a cost function with cost matrix $\Sigma$ by
\begin{align}
     R_{\Sigma}(\beta) := \frac{1}{N} \sum_{i=1}^N \left( 1 - y_i \big( \beta^{\top} x_i + u_* \|\beta\|_{*, \Sigma} \big) \right)_+.
\label{eq:expected-risk}
\end{align}
The classifier $\beta$ will be selected by the firm (in part) to minimize this empirical risk. Here, taking the expectation of \eqref{eq:RR_strageic_hinge_loss} has been replaced with the empirical loss, which can be assessed based on training data.

\subsection{Modeling the uncertainty set}\label{sec:uncertainty-set-model}
We propose to model the \emph{decision-dependent uncertainty} set \(\unknownset(\beta)\) as a polyhedral set defined by linear inequalities: \begin{align}\label{eq:DDU_set}
    \unknownset(\beta) = \{\unknown \in \mathbb{R}^d_+: \mathbf{F}(\beta) \unknown \leq \mathbf{h} + \mathbf{G} \beta\},
\end{align} 
where $ \unknown $ represents the cost-driving vector (uncertain variable), and $\beta$ denotes the classifier's first-stage decision vector. Here, $\mathbf{h}$ serves as a baseline constraint (e.g., capturing average budgets or behavioral limits) while the matrices $ \mathbf{F}(\beta) $ and $\mathbf{G}$ capture how the decision $\beta$ influences the structure and bounds of the uncertainty set, respectively, and thus its influence on the feasible magnitude of manipulation incentives.
Note that if $\mathbf{F}$ is a fixed ($\beta$-independent) matrix, the inequalities constrain each cost component independently. In contrast, allowing $\mathbf{F}$ to depend on $\beta$ means that the classifier can alter how different cost components interact. In practice, each component of the matrices can be set using domain expertise or estimated from historical data on strategic behavior; see Appendix~\ref{app:DDU_explain} for more discussion. 

\textbf{Motivating example, continued.} In college admissions, shifting emphasis from standardized tests to GPA or extracurriculars can alter the strategic cost landscape. For example, de-emphasizing SAT scores (e.g., through test-optional policies) has decreased demand for test prep (lowering SAT-related costs), while increasing demand for tutoring in coursework or essay coaching \citep{k12dive2023,inklings2025}, thus raising those prices. The constraint $\mathbf{F}(\beta) \unknown \leq \mathbf{h} + \mathbf{G} \beta$ captures this shift: $\mathbf{h}$ can represent a tutorial class's average price; the term $\mathbf{G}\beta$ reflects cost adjustments driven by the relative importance of different admissions criteria; $\mathbf{F}(\beta)$ governs the structural relationships among cost components, altering how increases in one type of preparation (e.g., GPA-related tutoring) constrain or interact with others (e.g., extracurricular coaching).

The polyhedral set form of \eqref{eq:DDU_set} aligns with standard constructions commonly used in the literature \citep[e.g.,][]{qiu2024two, zeng2022two,zhang2022two}. Its choice facilitates the computational tractability of TSRO problems with decision-dependent uncertainty by allowing an efficient reformulation of the TSRO problem as a linear program. Specifically, with this choice, we can solve the inner maximization over the uncertainty set $\unknownset(\beta)$ by leveraging KKT conditions, enabling iterative cut generation over a finite fixed set of scenarios in the outer loop. Further algorithmic details are provided in subsequent sections.  

\subsection{The two-stage robust optimization problem}\label{sec:proposed_model_DDTSRO}

We are now ready to state the firm's two-stage robust optimization (TSRO) problem with the proposed decision-dependent uncertainty set. The firm solves the following nested min-max-min problem in the first stage: \vspace*{-0.05in} \begin{align}\label{eq:tri_model_2_RO_DD_convex}
    \min_{\beta \in \mathcal{B}} \left( R_{\Sigma_0}(\beta) 
    + \max_{\unknown \in \unknownset(\beta)} \min_{\beta' \in \mathcal{B'}(\beta,\unknown)} R_{\Sigma(\unknown)}(\beta')\right),
\end{align}
where $R_{\Sigma}(\beta)$ are the empirical risks, as defined in \eqref{eq:expected-risk}, and $\unknownset(\beta)$ is the decision-dependent cost uncertainty set defined in \eqref{eq:DDU_set}. In words, the firm wants to choose the classifier $\beta\in\mathcal{B}$ by \textbf{minimizing} the strategic hinge loss under a known fixed cost $\Sigma_0$, while accounting for the worst-case (\textbf{maximum}) loss over the possible cost functions $\Sigma(\unknown)$ induced in the second stage by the choice of $\beta$, and accounting for the fact that the second-stage classifier $\beta'\in\mathcal{B}'$ is itself selected to \textbf{minimize} the second stage's loss after $\unknown$ is realized. (The bold terms in this description align, in order, with the min-max-min operations in \eqref{eq:tri_model_2_RO_DD_convex}.)

We also note that the choice of classifiers can be restricted, if desired, to specific sets. Formally, $\mathcal{B} = \{\beta \in \mathbb{R}^d : \mathbf{A} \beta \geq \mathbf{b}\}$ represents the feasible set for the first-stage classifier, which, \textit{when needed}, can encode constraints related to fairness, feature importance, or interpretability. For example, in the context of university admissions, if an institution places restrictions on the weight assigned to GPA, or on interaction terms between GPA and essay scores, these requirements can be modeled through the constraint matrix $\mathbf{A}$ and vector $\mathbf{b}$. Similarly, the second-stage feasible set, $\mathcal{B'}(\beta, \unknown) = \{\beta' \in \mathbb{R}^d : \mathbf{B}_2 \beta' \geq \mathbf{d} - \mathbf{B}_1 \beta - \mathbf{E} \unknown\}$, defines the space of allowable adjustments to the classifier in the second stage. This set can also be instantiated \textit{when needed} to reflect downstream policy or resource constraints. For example, in school admissions, suppose that shifting emphasis away from standardized tests leads to a surge in costly extracurricular investments by applicants. 
In response, the school may wish to limit the weight placed on extracurriculars in the subsequent year, either to mitigate inequity or to comply with institutional policy. Such adjustments (e.g., enforcing minima or maxima on feature weights) can be encoded via the matrices $\mathbf{B_1,B_2,\text{ and }E}$, capturing how second-stage policy decisions $\beta'$ will depend on prior decisions and observed cost shifts. 
\section{Solving the Firm's Optimization Problem: Approximations and Reformulations}\label{sec:reform}
In this section, we present an algorithm to solve the firm's two-stage robust optimization problem in \eqref{eq:tri_model_2_RO_DD_convex}. Although there exist a number of recent methods for solving TSROs with decision-dependent uncertainty (see Section~\ref{sec:intro}), these methods are not directly applicable to our proposed model due to differing structural requirements. 

Specifically, our second stage problem in \eqref{eq:tri_model_2_RO_DD_convex} is nonlinear, and the uncertainty appears not only into the constraint set, but also in the objective through the dual-norm term $\|\beta'\|_{*,\Sigma(\unknown)}$, causing the dual feasible set to vary with each realization and preventing the reuse of dual extreme points or valid cut generation. To address this, we approximate the dual-norm with a simpler expression that removes the nonlinearity, but introduces bilinear terms involving $\unknown$ and $\beta'$. 
We then reformulate the problem using additional variables to linearize the max operator in the strategic hinge loss, and apply the McCormick envelope method to relax the bilinear terms, yielding a linear second-stage problem. These approximations and reformulation make the resulting problem compatible with standard C\&CG-based solution methods for TSROs. In the remainder of this section, we provide details on the above reformulation steps, along with intuitive support for them in the context of strategic classification. 
\subsection{\texorpdfstring{Approximating the dual-norm $\|\cdot\|_{*,\Sigma}$}{Approximating the dual-norm}}\label{sec:dual_approximation}
Consider the empirical risk  \eqref{eq:expected-risk} appearing in the firm's optimization problem. This risk contains terms of the form $\|\beta\|_{*,\Sigma}$, due to which the cost matrix $\Sigma(\unknown)$ introduces an input-dependent geometry affecting the feasible region of the dual problem. Reformulating this norm is therefore the first step in the process, ensuring that the second-stage problem satisfies the structural requirements outlined above for bilevel decomposition and tractable optimization. Specifically, we propose the following bound. 
\begin{restatable}{lemma}{primelemma} \label{lemma:dual-norm-linearized}
There exists a constant $M>0$ such that
\[\|\beta\|_{*,\Sigma(\unknown)} \leq M \|\Sigma(\unknown)^{-\frac{1}{2}}\|_\infty \|\beta\|_\infty.\]
\end{restatable}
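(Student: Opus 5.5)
Since $\|\beta\|_{*,\Sigma(\unknown)} = \|\Sigma(\unknown)^{-\frac{1}{2}}\beta\|_*$ by the definition following \eqref{eq:RR_strageic_hinge_loss}, the plan is to peel the statement into two elementary inequalities: norm equivalence in finite dimension, followed by submultiplicativity of an induced matrix norm. Here $\|\Sigma(\unknown)^{-\frac12}\|_\infty$ is read as the operator norm induced by the vector $\ell_\infty$-norm, i.e.\ the maximum absolute row sum.

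\emph{Step 1 (norm equivalence).} The dual norm $\|\cdot\|_*$ of the $p$-norm is the $q$-norm with $1/p+1/q=1$. Because all norms on $\mathbb{R}^d$ are equivalent, there is a constant $M>0$ depending only on $d$ and $q$ such that $\|z\|_q \le M\|z\|_\infty$ for all $z\in\mathbb{R}^d$. Explicitly, $\|z\|_q \le d^{1/q}\|z\|_\infty$, so $M=d^{1/q}$ works; when $p=1$ we have $q=\infty$ and $M=1$. Applying this with $z=\Sigma(\unknown)^{-\frac12}\beta$ gives
\[
\|\beta\|_{*,\Sigma(\unknown)} \le M\,\|\Sigma(\unknown)^{-\frac12}\beta\|_\infty .
\]
The key point is that $M$ does not depend on $\unknown$ or $\beta$, which is what makes it a valid constant in the lemma.

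\emph{Step 2 (induced-norm bound).} By the definition of the induced $\infty$-norm,
\[
\|\Sigma(\unknown)^{-\frac12}\beta\|_\infty \le \|\Sigma(\unknown)^{-\frac12}\|_\infty\,\|\beta\|_\infty .
\]
Combining this with Step 1 yields the claim. Since $\Sigma(\unknown)\succ 0$, as guaranteed by positivity and boundedness of $g$, the inverse square root $\Sigma(\unknown)^{-\frac12}$ is well defined and the right-hand side is finite.

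The only delicate point is interpretive rather than technical. If $\|\cdot\|_\infty$ on the matrix were meant entrywise (the max absolute entry), Step 2 would instead give
\[
\|A\beta\|_\infty \le d\,\max_{ij}|A_{ij}|\,\|\beta\|_\infty ,
\]
so the extra factor $d$ is absorbed into $M$ and the lemma still holds. I therefore expect no real obstacle: it suffices to state the choice $M=d^{1/q}$ (or $d^{1+1/q}$ under the entrywise reading) and note that it is uniform over $\unknown\in\unknownset(\beta)$.
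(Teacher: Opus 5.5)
Your proof is correct and follows the paper's argument: norm equivalence to get $\|\beta\|_{*,\Sigma(\unknown)} \le M\|\Sigma(\unknown)^{-\frac12}\beta\|_\infty$, then the induced (maximum absolute row sum) $\infty$-norm bound $\|\Sigma(\unknown)^{-\frac12}\beta\|_\infty \le \|\Sigma(\unknown)^{-\frac12}\|_\infty\|\beta\|_\infty$. Your explicit $M=d^{1/q}$ specializes to the paper's $\ell_2$ example $M=\sqrt{d}$, and you apply the operator-norm inequality directly instead of taking the paper's roundabout intermediate step through $\|\Sigma(\unknown)^{-\frac12}I_{d\times d}\beta\|_\infty$.
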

The proof is provided in Appendix~\ref{app:proof_lemma_dual_norm_app}, and invokes the equivalence of norms theorem, and the submultiplicativity of the $\ell_\infty$ norm. We note that $M$ will depend on the $p$-norm adopted in the cost function; Appendix~\ref{app:proof_lemma_dual_norm_app} gives an example. 

The upper bound in Lemma~\ref{lemma:dual-norm-linearized} provides a linearizable surrogate for the dual norm, which we will use to reformulate the second stage problem with a fixed dual feasible set suitable for decomposition. While this bounding argument is strictly necessary only for the second stage (since the cost matrix $\Sigma(\unknown)$ depends on the unknown realization $\unknown$), we will also apply the same approximation to the first-stage term. Doing so ensures a consistent treatment of both stages and simplifies the subsequent linearization, making the overall problem more amenable to bilevel decomposition.

\textbf{Intuitive support for approximating the dual norm.} 
Technically, the proposed dual-norm approximation makes the second-stage uncertainty set decision-independent and improves robustness in strategic classification. As shown in Lemma 2.3 in \citet{rosenfeld2023one}, for a cost function of the form $c(x,\hat{x}) = \phi(\|\hat{x}-x\|_{\Sigma(\unknown^k)})$, the quantity $\|\beta'\|_{*, \Sigma(\unknown^k)}$ represents the maximum possible score change resulting from an agent's strategic response to a fixed classifier $\beta'.$ By approximating this term via the inequality in Lemma~\ref{lemma:dual-norm-linearized}, we obtain a conservative upper bound on the cost-aware strategic hinge loss. For negatively labeled (unqualified) agents, $y_i=-1$, the loss is \emph{upper bounded} as $\ell_{\Sigma(\unknown),\text{s-hinge}}(\beta'^\top x_i, y_i)\leq \big( 1 + ( \beta^{'\top} x_i + u_* M \|\Sigma(\unknown)^{-\frac{1}{2}}\|_{\infty} \|\beta'\|_{\infty})\big)_{+}.$ For positively labeled (qualified) agents, 
$\!y_i\!=\!1$, the same approximation gives a \emph{lower bound} on the hinge loss: $\ell_{\Sigma(\unknown),\text{s-hinge}}(\beta'^\top x_i, y_i)\geq \big( 1 - ( \beta^{'\top} x_i + u_* M \|\Sigma(\unknown)^{-\frac{1}{2}}\|_{\infty} \|\beta'\|_{\infty})\big)_{+}$. 

This ``asymmetry'' has important implications: the approximation leads to a more conservative (i.e., robust) response to the manipulative behavior of unqualified agents, who reduce classifier accuracy by altering features to obtain a false positive. At the same time, it is less conservative toward strategic behavior by already-qualified agents, whose feature manipulation only reinforces the correct decision. Therefore, this approximation supports designing classifiers that prioritize robustness against harmful manipulation, which aligns with real-world settings where accepting unqualified agents carries higher cost than misclassifying qualified ones. 

\subsection{Linear reformulation of the stage problems}\label{sec:linear_2stgRO_refor} 

Using the proposed bound on the dual norm in Lemma~\ref{lemma:dual-norm-linearized}, we can approximate the firm's optimization \eqref{eq:tri_model_2_RO_DD_convex} as \vspace*{-0.1in}
\begin{align}
&\min_{\beta \in \mathcal{B}} \!\Big[\!\tfrac{1}{N}\!\!\sum_{i=1}^N\! \left(\!1 \!-\! y_i(\beta^{\top} x_i + u_* M \|\Sigma_0^{-\frac{1}{2}}\|_{\infty} \|\beta\|_{\infty})\!\right)_{\!\!+}\Big]
\notag\\ 
&+ \max_{\unknown \in \unknownset(\beta)}  \min_{\beta' \in \mathcal{B'}(\beta,\unknown)} \notag\\  
&\Big[\! \tfrac{1}{N} \! \!\sum_{i=1}^N  \!\left(\!1 \!- \!y_i \!\big( \beta'^{\!\top}\! x_i \!+\! u_* M  
\!\lVert \Sigma(\unknown)^{\!-\frac{1}{2}}\lVert _{\infty} \lVert \beta'\lVert _{\infty}\!  \big)\!\right)_{\!\!+} \Big].
\label{eq:tri_model_approximation_1}
\end{align}
Two further issues must be addressed to make \eqref{eq:tri_model_approximation_1} amenable to C\&CG-based solution methods for TSROs: (i) nonlinear objective function (note the infinity norms and the max operator), and (ii) the second stage objective is decision-dependent (note the appearance of the matrix $\Sigma(\unknown)$). 
We address these issues next. 

\paragraph{Linearizing and removing decision-dependence in the second-stage objective function.}
We begin with the second-stage objective function. 
First, the max operator ($(\cdot)_+$) in the objective function is handled via the introduction of the variables $s_{2,i}\!\in\! \mathbb{R}_+$, $i\!\in \![N]\! := \!\{1,\ldots,N\}$ to index data points, subject to the constraints
$s_{2,i} \! \geq \!0$, and $s_{2,i} \! \geq \! 1-y_i(\beta'^{\top}\!x_i\!+u_*M\lVert\Sigma(\unknown)^{-\tfrac12}\rVert_\infty \lVert\beta'\rVert_\infty).$
Let $s_2 \!=\! (s_{2,1}, \dots, s_{2,N}) \!\in \!\mathbb{R}^N_+$ denote the vector of these new variables at the second-stage.\footnote{We note that despite this reformulation adding $N$ (number of sample) constraints, this will not raise computational issues as the algorithm solves this problem in minibatches.} 
Next, we linearize the $\infty$-norm terms by introducing the variables $t^q_2,t^\unknown_2 \in \mathbb{R}_+$ and imposing the following constraints: $-t^q_2 \!\leq \!\beta'_j\! \leq \!t^q_2, ~\forall j \!\in \![d]$ and $-t^{\unknown}_2 \!\leq\! \sum_{r=1}^d [\Sigma(\unknown)^{\!-\frac{1}{2}}]_{jr}\! \leq\! t^{\unknown}_2, ~\forall j \!\in \![d]$, where we use $[d]\! := \!\{1,\ldots,d\}$ to index coordinates of $d$-dimensional vectors. Linearizing the infinity norms introduces a bilinear term in the $s_{2,i}$ variables' constraints; specifically,
\[s_{2,i} \geq 1-y_i(\beta'^\top x_i + u_* M t^\unknown_2 \cdot t^q_2).\]
To preserve linearity, we adopt the McCormick envelope \citep{mccormick1976computability}, introducing an auxiliary variable $z \in \mathbb{R_+}$ to represent the bilinear term $z = t^q_2 \cdot t^{\unknown}_2$, along with the corresponding envelope constraints. This relaxation is tight for bounded variables and widely used in nonlinear programming \citep{tawarmalani2005polyhedral, selvi2023reformulation, zhen2021extension}. McCormick envelope requires bounds on $t^q_2$ and $t^{\unknown}_2$. Intuitively, $\|\Sigma(\unknown)^{-\frac{1}{2}}\|_\infty$ reflects the most cost-efficient direction of manipulation, while $\|\beta'\|_\infty$ measures classifier sensitivity; thus, their product captures the worst-case strategic impact on the firm loss. These bounds can be derived from domain knowledge. 
We denote these bounds by $t^\unknown_{2,\max}$ and $t^q_{2,\max}$, respectively. Finally, we decompose the second-stage classifier's weight vector as $\beta'\! = \!\beta'^+ \!\!-\! \beta'^-$, where $\beta'^+,\beta'^- \in \mathbb{R}^d_+$ denote its positive and negative components. The linearized second-stage problem will be 
\begin{align}\label{eq:rec_linearize}
\min_{\beta'^{+}, \beta'^{-}, s_2, t^q_2, t^{\unknown}_2, z}  \frac{1}{N} \sum_{i=1}^N s_{2,i}
\end{align}
\setcounter{equation}{\value{equation}-1}
\begin{subequations}
subject to
\renewcommand{\theequation}{\eqref{eq:rec_linearize}.\alph{equation}}
\begin{align}
&s_{2,i}\! \geq \! 1 - \!y_i \! \left(\! (\beta'^{+}\!\!\! - \!\beta'^{-})^{\!\top} \! x_i\! + \! u_* M \! z\!\right)\!,
{\forall i \in\![N],\! }& \label{eq:hinge_loss_second} \\
&-t^q_2 \leq \beta'^{+}_j - \beta'^{-}_j \leq t^q_2,~\ \  {\forall j \in [d]}, &\label{eq:beta_bounds} \\
&-t^{\unknown}_2 \leq \textstyle \sum_{r=1}^d [\Sigma(\unknown)^{-\frac{1}{2}}]_{jr} \leq t^{\unknown}_2, ~\  \  {\forall j \in [d]},& \label{eq:const_sigma_infity} \\
&\mathbf{B}_2 (\beta'^{+} - \beta'^{-}) \geq \mathbf{d} - \mathbf{B}_1 (\beta^{+} - \beta^{-}) - \mathbf{E} \unknown, \label{eq:linear_const} \\
&t^q_2 \leq t^q_{2,\max}, ~~~ t^{\unknown}_2 \leq t^{\unknown}_{2,\max}, \label{eq:bound_norms_beta}\\
&z\leq t^{\unknown}_{2,\max} t^q_2,~~~ z\leq t^q_{2,\max} t^{\unknown}_2, \label{eq:w_upper} \\
&z\geq t^q_{2,\max} t^{\unknown}_2 + t^{\unknown}_{2,\max} t^q_2 - t^{\unknown}_{2,\max} t^q_{2,\max}, \label{eq:w_lower} \\
&\beta'^{+}_j\!, \beta'^{-}_j \!, s_{2,i}, t^q_2, t^{\unknown}_2, \! z \!\geq 0,\  {\forall i \in [N],\forall j \in [d].}& \label{eq:recource_linearCons_last}
\end{align}
\end{subequations}

\paragraph{Linearizing the first-stage objective functions.}
While not strictly necessary, we also linearize the first-stage objective function of \eqref{eq:tri_model_approximation_1} for consistency and to reduce computation costs. This involves linearizing the max operator with the variables \(s_1\! = \!(s_{1,1}, \dots, s_{1,N}) \!\in \!\mathbb{R}^N_+\), linearizing the infinity norm on $\beta$ with the variable $t_1 \!\in \!\mathbb{R}_+$, and representing the classifier's weight vector as $\beta \!=\! \beta^+ \!- \beta^-$, where $\beta^+\!\!, \beta^- \!\in \!\mathbb{R}^d_+$ denote the positive and negative components. The resulting linearized first-stage problem is
\begin{align}\label{eq:first_stage_linearize}
\min_{\beta^{+}, \beta^{-}, s_1,t_1}  \frac{1}{N} \sum_{i=1}^N s_{1,i}
\end{align}
subject to
\vspace*{-0.08in}
\begin{subequations}
\renewcommand{\theequation}{\eqref{eq:first_stage_linearize}.\alph{equation}}
\begin{align}
& s_{1,i} \!\geq \!1 \!- \! y_i \!\left( (\beta^+ \!\!\!-\! \beta^-)^{\!\top} \!x_i \!+\! u_* M \| \Sigma_0^{\!-\frac{1}{2}}\!\|_{\infty} t_1 \right), {\forall i \in [N]},& \label{eq:hinge_loss} \\
&-t_1 \leq \beta^+_j - \beta^-_j \leq t_1, ~\  \ {\forall j \in [d]},& \label{eq:bound_beta} \\
&\mathbf{A} (\beta^+ - \beta^-) \geq \mathbf{b}, & \label{eq:first_stage_lin_const_last} \\
&\beta^+_j, \beta^-_j, t_1, s_{1,i} \geq 0, ~~\  {\forall i \in [N]},~~ {\forall j \in [d]}. & \label{eq:first_stage_lin_constnonnegativ}
\end{align}
\end{subequations}
\subsection{Final reformulation of the firm's problem}\label{sec:linear_tri_level_final}
The final, tri-level linear reformulation of our original problem \eqref{eq:tri_model_2_RO_DD_convex} is 
\setcounter{equation}{\value{equation}-1}
\begin{align}\label{eq:linear_tri_OF}
    \min_{v_1 \in \mathcal{S}_{t1}} \quad  \frac{1}{N} \sum_{i=1}^N s_{1,i}
    + \max_{\unknown \in \unknownset(\beta)} \min_{v_2 \in \mathcal{S}_{t2}(\unknown)}  \frac{1}{N} \sum_{i=1}^N s_{2,i}.
\end{align}
Here, the first-stage and second-stage decision vectors are \(
v_1 := (\beta^+, \beta^-, s_1, t_1)\ \ \text{and}\ \ 
v_2 := (\beta^{'+}, \beta^{'-}, s_2, \linebreak[4]  t^{\unknown}_2, t^q_2, z),
\) respectively, 
and the feasible sets for the first-stage and second-stage problems are given by $\linebreak[4] \mathcal{S}_{t1}  = \left\{ \beta^+ \!, \beta^-\!,s_1, t_1 \middle| \text{ \eqref{eq:hinge_loss} -- \eqref{eq:first_stage_lin_constnonnegativ}} \right\}$ and $\mathcal{S}_{t2}(\unknown) = \left\{\beta'^+\!, \beta'^-\!, s_2, z, t^q_2, t^{\unknown}_2 \middle|\; \text{\eqref{eq:hinge_loss_second} -- \eqref{eq:recource_linearCons_last}}\right\}.$ 
 
To solve this reformulated problem, we adopt a C\&CG-based solution approach, instantiating it using the Benders C\&CG algorithm of \citet{zeng2022two}. The application of this algorithm is subject to mild assumptions on \eqref{eq:linear_tri_OF}, which we also assume: (i) For any $\beta \in \mathcal{B}$, $\unknownset(\beta) \neq \emptyset$; (ii) $\unknownset(\beta)$ is bounded, i.e., for any $\beta \in \mathcal{B}$, $\unknown_j < \infty$ $\forall j$; (iii) The program $\min \{ \frac{1}{N} \sum_{i=1}^N s_{1,i} + \frac{1}{N} \sum_{i=1}^N s_{2,i}: v_1 \in \mathcal{S}_{t1}, \unknown \in \unknownset(\beta),v_2 \in \mathcal{S}_{t2}(\unknown) \}$ has a finite optimal value.

C\&CG methods iteratively refine a master problem and generate cuts based on worst-case uncertainty realizations. Our earlier linearizations and norm approximations are crucial, as they allow us to recast the original nonlinear tri-level formulation~\eqref{eq:tri_model_2_RO_DD_convex} into the linear tri-level reformulation~\eqref{eq:linear_tri_OF}, which can then be reformulated into a linear bilevel problem and ultimately into a form suitable for C\&CG-based methods. Details of the master problem, subproblem formulations, and algorithmic steps appear in Appendices~\ref{app:reformulation}–\ref{app:CCG_steps}. 

\section{Numerical Experiments}\label{sec:7_NE}

We now evaluate our two-stage robust strategic classification framework with decision-dependent uncertain manipulation costs by solving the reformulation obtained in~\eqref{eq:linear_tri_OF}. We conduct experiments on both synthetic (see Appendix~\ref{app:synthetic_exp}) and real-world university admission (semi-synthetic) datasets. 
\begin{table*}[ht]
\small
\caption{Average $\pm$ standard error across stages and totals.}
\vspace{-0.1in}
\begin{center}
\setlength{\tabcolsep}{5pt}
\renewcommand{\arraystretch}{1.2}
\begin{tabular}{lcccccc}
\toprule
Metric & First-stage DD & First-stage DI & Second-stage DD & Second-stage DI & Total DD & Total DI \\
\midrule
0-1 Loss & $26.80 \pm 0.92$ & $25.92 \pm 0.79$ & $7.43 \pm 0.36$ & $23.74 \pm 3.79$ & \textbf{34.23} & 49.66 \\
Manipulations & $37.00 \pm 1.10$ & $33.84 \pm 0.79$ & $2.50 \pm 0.40$ & $18.33 \pm 3.79$ & \textbf{39.50} & 52.17 \\
Qualified Manip. & $23.52 \pm 0.98$ & $9.12 \pm 0.49$ & $0.18 \pm 0.03$ & $1.84 \pm 0.79$ & \textbf{23.70} & 10.96 \\
Unqualified Manip. & $13.44 \pm 0.67$ & $24.64 \pm 0.75$ & $2.30 \pm 0.38$ & $16.46 \pm 3.34$ & \textbf{15.74} & 41.10 \\
\bottomrule
\end{tabular}
\label{tab:DD-results}
\end{center}
\end{table*}

We consider a university admissions setting with two features: $x_1$ (SAT score) and $x_2$ (number of extracurricular activities). SAT distributions are taken from the SAT Suite of Assessments Annual Report \citep{collegeboard2023sat} and extracurricular counts from \cite{park2024extracurricular}, both for 2018–2019. Each feature is standardized, and samples are drawn from the respective empirical distributions. Admissions weights follow survey data \citep{nacac_factors2025}: $0.85$ on SAT and $0.15$ on extracurriculars. Labels are assigned as $y=1$ if $0.85x_1+0.15x_2>\theta$ and $y=-1$ otherwise, with $\theta$ defined by a 75th-percentile SAT score (1210) and a moderate extracurricular profile (4–5 activities) \citep{crimson2025goodSAT,park2024extracurricular}. To model noise, we adopt feature-dependent mislabeling \citep{zhang2021learning,smart2023bootstrapping}, with mislabeling more likely near the boundary. Through this process, we generate a dataset of $10{,}000$ samples. See Appendix~\ref{app:data-setup} for more details.

Students receiving negative outcomes may strategically manipulate their features to obtain a positive outcome, and receive utility $u=1$ in both stages. The manipulation cost is defined by the $\ell_2$-norm, with cost function $c(x,\hat{x}) = \phi\big(\|\Sigma^{1/2}(\hat{x}-x)\|_2\big) 
= 0.5\|\Sigma^{1/2}(\hat{x}-x)\|_2.$  We focus on a diagonal cost first, and report additional experiments with off-diagonal matrices in Appendix~\ref{app:off-diag-cost-mat}, which yield similar results. We set the first-stage cost matrix to $\Sigma_0 = \mathrm{diag}(3,6)$, so manipulating $x_2$ is twice as costly as $x_1$, consistent with empirical evidence \citep{aspen2025sports1,collegeinvestor2024satprep}. 
The second-stage cost matrix is given by $\Sigma(\unknown) = \text{diag}(g(\unknown_1), g(\unknown_2)) \cdot \Sigma_0,$ with $g(\unknown) = \frac{1}{\unknown^2}$. Smaller $\unknown_i$ increase $g(\unknown_i)$, raising the manipulation cost of feature $x_i$. For our decision-dependent uncertainty set, we fix $\mathbf{F}$ so $\beta$ affects only the bounds through $\mathbf{G}\beta$. Specifically, we assume the firm models its uncertainty set as $\unknownset(\beta) = \{ \unknown \in \mathbb{R}^2_+ : \mathbf{F}\unknown \leq \mathbf{h} + \mathbf{G}\beta \}$, with 
\[
\mathbf{F} = \begin{bmatrix} 0.9 & 0.3 \\ 0.3 & 0.6 \end{bmatrix},~
\mathbf{h} = \begin{bmatrix} 0.5 \\ 0.5 \end{bmatrix},~
\mathbf{G} = \begin{bmatrix} 1.5 & 0.9 \\ 0.6 & 1.5 \end{bmatrix}.
\]
Larger diagonal entries of $\mathbf{G}$ indicate that each $\beta_i$ primarily governs manipulation of its own feature, while off-diagonal terms capture spillovers (e.g., extracurricular weight weakly relaxing SAT constraints). See Appendix~\ref{app:cost_ddu_details} for details on the rationale for the specific numerical choices. 

We compare our approach against a baseline classifier that ignores the dependence of manipulation costs on decisions, which we term the decision-independent (DI) classifier. Our decision-dependent (DD) classifier, in contrast, accounts for this dependence and is obtained via our two-stage robust optimization framework with decision-dependent uncertainty costs. For the baseline in the first stage, and for both models in the second stage, we optimize the strategic hinge loss in \eqref{eq:RR_strageic_hinge_loss} with respect to $\Sigma_0$ and realized cost matrices $\Sigma(\unknown^k)$, where $\unknown^k \in \unknownset(\beta)$ (see Appendix~\ref{app:proceedure_optim} for details on this optimization procedure). Performance is measured over $25$ independent instances, each with $100$ new test points sampled from the $10{,}000$-point dataset. Second-stage results are further averaged over $10$ realizations of $\unknown$. We implemented the Benders C\&CG algorithm in Python using GurobiPy. 

Table~\ref{tab:DD-results} compares the performance of the DD and DI classifiers using the following metrics: (i) the actual 0-1 loss due to misclassification (``0-1 Loss'' in the table), (ii) the total number of manipulations across all agents (``Manip.'' in the table), (iii) the number of manipulations by qualified ($y=1$) agents (``Qual. Manip.'' in the table), and (iv) the number of manipulations by unqualified ($y=-1$) agents (``Unqual. Manip.'' in the table). 

\begin{wrapfigure}[8]{r}{0.48\linewidth} 
    \vspace{-0.32in}
    \centering
    \includegraphics[width=\linewidth]{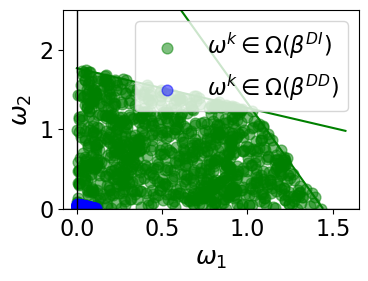}
    \vspace{-0.3in}
    \caption{Uncertainty sets $\Omega(\beta^{\text{DD}})$ vs.\ $\Omega(\beta^{\text{DI}})$.}
    \label{fig:all_omega}
\end{wrapfigure}

Our findings verify that the DD classifier adjusts manipulation costs in the second stage in a manner that improves accuracy relative to its DI counterpart. Specifically, as shown in Table~\ref{tab:DD-results}, the DD classifier yields much lower second-stage loss ($7.43$ vs.\ $23.74$) and manipulations ($2.5$ vs.\ $18.33$) than DI. This advantage comes from anticipating how decisions affect manipulation costs, making manipulation harder ($\unknown_i < 1, \forall i$). Figure~\ref{fig:all_omega} visualizes this: our DD classifier's uncertainty set $\unknownset(\beta^{\text{DD}})$ is strictly smaller than that of the baseline DI classifier $\unknownset(\beta^{\text{DI}})$, and induces higher manipulation costs in the second stage. 

In the first stage, DD performs slightly worse (loss $26.80$ vs.\ $25.92$), but overall the dependency-aware model yields clear gains: total loss drops from $49.66$ to $34.23$, and manipulations fall from $52.17$ to $39.50$. These results confirm that modeling decision-dependent costs reduces both firm error and agents' induced strategic manipulations.

\textbf{Scalability.} We have also extended our experiments to higher-dimensional synthetic data (3-, 4-, 10-, 30-, and 60-dimensional feature spaces). We observe that the maximum learning time for finding the DD-classifier under our proposed method is 132.38 seconds, supporting the scalability of our approach. Moreover, our observed findings on DD vs. DI classifiers remain consistent as dimensionality increases. Detailed results are provided in Appendix~\ref{app:scalability}.

\textbf{Optimality gap analysis.} We next assess the gap between the solution quality (in terms of the cumulative two-stage loss) obtained by our approximation, against the optimal solution of the original TSRO obtained via brute-force search. While doing so, we also consider four intermediate settings that isolate the impact of each key approximation and reformulation in both the first- and second-stage problems, as well as the choice of McCormick envelope upper bounds, on the gap. See Appendix~\ref{app:subopy_compu} for details of the experimental setup. The results are summarized in Table~\ref{tab:bounds_gap_time}. 

\begin{table*}[ht]
\centering
\small
\caption{Optimality gap for different approximations of the two-stage robust optimization problem~\eqref{eq:tri_model_2_RO_DD_convex}.}
\label{tab:bounds_gap_time}
\renewcommand{\arraystretch}{1.2} 
\setlength{\tabcolsep}{5pt}
\begin{tabular}{p{3cm} p{3cm}| p{1.5cm} p{1.2cm}  p{0.8cm} p{1.5 cm}|p{1.6cm} p{1.4cm}}
\hline
\multicolumn{2}{c|}{\textbf{Settings}} & \multicolumn{4}{c|}{$(t^q_{2,\max}, t^\omega_{2,\max})$} & \textbf{} & \textbf{} \\ 
\cline{1-6}
\textbf{First-stage}& \textbf{Second-stage}& $(0.2,0.2)$ & $(1,0.5)$ & $(2,1)$ & $(0.5,0.5)$ & \textbf{Avg Gap} & \textbf{Avg Time} \\ 
\hline
Approx. \& Linear & Approx. \& Linear & 25\% & 8\%  & 44\% & 21\% & 24.5\%  & 11.76 s \\
Approx. \& Nonlinear& Approx. \& Linear & 23\% & 5\%  & 35\% & 16\% & 19.75\% & 13.95 s \\
Exact \& Nonlinear & Approx. \& Linear & 20\% & 2\%  & 10\% & 11\% & 10.75\% & 14.63 s \\
\hline
Exact \& Nonlinear& Approx. \& Nonlinear & -- & -- & -- & -- & 0.2216\% & 3.11 h \\
Exact \& Nonlinear & Exact \& Nonlinear & -- & -- & -- & -- & 0\% & 4.88 h \\
\hline
\end{tabular}
\end{table*} 
We first highlight how reducing the level of approximation in the first stage reduces the optimality gap. Starting from the fully approximated and reformulated model, which has an average optimality gap of 24.5\% compared to the brute-forced solution, the gap decreases by an average of 4.75\% when allowing the first-stage model to be nonlinear and only approximating the dual-norm in it, and by 13.75\% once we also remove its dual-norm approximation. Now, moving onto the approximations of the second-stage (while keeping the first-stage in its original form), allowing the second-stage model to be nonlinear and only approximating the dual-norm in it, the optimality gap nearly vanishes, though this comes at a substantially higher computational cost (3 \emph{hours} vs. 14.63 \emph{seconds}). Notably, solving the original TSRO requires approximately 4.88 \emph{hours} as opposed to the 11.76 \emph{seconds}  average required by our method. 

We also note the impact of McCormick envelope upper bounds on the gap. While bounds of $(1, 0.5)$ yield relatively small gaps (8\%, 5\%, and 2\%), tighter bounds such as $(0.2, 0.2)$ can increase the gap (25\%, 23\%, and 20\%), and looser bounds $(2, 1)$ also lead to larger gaps (44\%, 35\%, and 10\%). This highlights the need for careful parameter calibration for improving the optimality gap, though the notable computational advantage of our method persists regardless.  

Finally, we also compare the classifier found through our method to the optimal solution from brute-force search in terms of the uncertainty set, and the total number of manipulations by qualified and unqualified agents, emerging under each. We find that while, as expected, the optimal solution has a stricter uncertainty set and is better at curbing manipulations, the performance gap remains small, verifying that our approximation is adopting the same targets for reducing uncertainty and shaping of strategic behavior as the optimal solution. See Appendix~\ref{app:true_vs_app_perfo} for details.

\section{Conclusion}\label{sec:conc}

We proposed a two-stage robust optimization framework for strategic classification with decision-dependent uncertainty, where future manipulation costs are shaped by past algorithmic decisions. Following this formulation, we proposed problem-specific approximations and relaxations that enabled solving this problem. Through semi-synthetic data experiments that leverage real university admissions data, we showed that the decision-dependent classifier trades slight first-stage accuracy for substantially lower overall loss and manipulation, improving robustness to strategic behavior over time. 
Future work directions include extending the framework to multistage problems with endogenous uncertainty, developing theoretical bounds on the relaxation gaps, and studying decision-dependent costs as functions of both endogenous (e.g., economic indicators) and exogenous (e.g., policy) contextual factors. The latter calls for new methods to learn uncertain costs and could lead to a broader framework for decision-dependent contextual stochastic optimization, an area that remains relatively underexplored. Besides, this perspective connects naturally to distributionally robust optimization, where polyhedral decision-dependent uncertainty sets may be generalized to ambiguity sets over probability distributions of uncertain parameters. 

\section*{Acknowledgments}
This work was also supported in part by Cisco Research and the Jordan University of Science and Technology. Any opinions, findings, and recommendations expressed in this material are those of the authors and do not necessarily reflect the views of the sponsors.

\section*{Impact Statement}
This work studies decision-dependent uncertainty in the costs of strategic behavior within strategic classification. Our contributions are primarily theoretical. That said, the problem we consider is motivated by commonly arising real-world settings in which algorithmic decisions affect human behavior. We illustrate this through a motivating example in school admissions and note that similar considerations arise in lending and hiring. Modeling such interactions has the potential to improve robustness, reduce harmful incentives, and shape human-algorithm feedback loops in these critical contexts. Our analysis relies on stylized assumptions and does not capture all aspects of real-world behavior. Any future application should therefore be approached with caution and appropriate oversight.

\bibliographystyle{icml2026}
\bibliography{strategicml-DDU.bib}

\newpage
\appendix
\onecolumn
\section{Additional Related Work}\label{app:rl_comparison} 

This appendix provides an extended discussion of the relationship between our two-stage robust optimization with a decision-dependent uncertainty set framework and reinforcement learning (RL)–based methods for sequential decision-making under uncertainty. Although our approach is not formulated within an RL paradigm, several conceptual connections exist, and we highlight key differences below.

The first point of contrast is in the known vs. unknown nature of the payoff/utility functions and environment dynamics. We formulate strategic classification as a two-stage robust optimization problem with a decision-dependent uncertainty set, where notably, the agent's feasible manipulation region (and specifically the effective manipulation cost) is an explicit function of the classifier's decision. This allows for deterministic worst-case guarantees rather than the simulation-based or asymptotic guarantees common in RL. As a result of this model, in our framework, the utility functions (firm and agent payoffs), and transition probabilities (mapping from first-stage decision to realized second-stage costs) are \emph{known} (albeit stochastic, so lack of knowledge in their realizations is what causes uncertainty); in contrast, RL would be able to learn through repeated interactions had these factors been \emph{unknown}. Given this difference in setup, RL algorithms would implement \emph{approximate dynamic programming}, solving long-horizon problems through stochastic approximation, whereas our two-stage framework corresponds to \emph{exact dynamic programming}, where both the classifier's decision and the agent's best response are solved through optimization. We also note that even though we employ an approximation of the agent's best response following \cite{rosenfeld2023one}, our formulation still yields an analytical closed-form surrogate for the agent's response without simulation or iterative learning. 

A second point of comparison is in our consideration of robustness or worst-case guarantees in the solution. Robust reinforcement learning methods (e.g., robust MARL \cite{zhang2020robust}, distributionally robust Markov games \cite{shi2025breaking}, Bayesian Stackelberg RL \cite{sengupta2020multi}, or ReBeL \cite{brown2020combining}) treat strategic behavior (with uncertainty about the environment's transition dynamics, rewards, or opponent policies) as part of a repeated Markov game and learn opponent responses through sampling, self-play, or iterative value-function approximation. Their \emph{uncertainty} sets are \emph{exogenous} or Bayesian and, crucially, (unlike the state transitions and reward realizations) the uncertainty sets do not depend on the learner's own decision. In contrast, our model of decision-dependent uncertainty sets adds a feedback loop on the uncertainty sets as well.

\section{Additional Interpretation of the Decision-Dependent Uncertainty (DDU) Set Model}
\label{app:DDU_explain}
In practice, as we explained briefly in the main paper, each component of the matrices $\mathbf{F}$,$\mathbf{h}$, and $\mathbf{G}$ encode economically meaningful constraints and can be specified using domain expertise, data-driven estimation, or both. The vector $\mathbf{h}$ determines the baseline upper bounds on the components of $\omega$, which determine the realization of the second-stage cost $\Sigma(\omega)$ via $g(\omega)$. For instance, in a simple illustrative case where $\Sigma(\omega)=\operatorname{diag}(2\omega_1,5\omega_2)$ and $\omega_i \leq h_i$, choosing $h_i=1$ could encode expected decreases in preparation costs based on expert assessment or historical data. More generally, $\mathbf{h}$ can be set using domain expertise (e.g., anticipated shifts in coaching or tutoring markets) or estimated from data on past behavioral responses. 

The matrix $\mathbf{G}$ determines how the classifier’s coefficients influence the feasible magnitude of manipulation incentives. Positive diagonal entries $G_{ii}$ indicate that increasing weight on feature $i$ expands the admissible range of the corresponding $\omega_i$, while negative values imply the opposite. Off-diagonal terms encode cross-feature effects—for example, how emphasizing one criterion may indirectly relax or tighten incentives to adjust another. These relationships can likewise be informed by domain experts (e.g., admissions officers’ beliefs about how applicants reallocate effort) or estimated empirically from behavioral data linking past policy changes to observed manipulation patterns. In this way, $\mathbf{G}$ provides a structured mechanism for capturing decision–uncertainty interactions. 

The matrix $\mathbf{F}$ imposes structural relationships among the components of $\omega$. Off-diagonal entries represent how increases in one type of preparation constrain or interact with other forms of preparation. For example, additional GPA-focused tutoring may limit resources available for extracurricular coaching, or vice versa. These couplings can be estimated from historical correlations in preparation behavior or specified by experts familiar with how various investments substitute for or complement each other.

\section{Proof of Lemma~\ref{lemma:dual-norm-linearized}}\label{app:proof_lemma_dual_norm_app}
We use Lemma~\ref{lemma:dual-norm-linearized} to upper bound the dual-norm term.
\primelemma*

\begin{proof}
From the \textit{equivalence of norms theorem}, in finite-dimensional spaces, we know that for any two norms $ \|\cdot\|_a $ and $ \|\cdot\|_b $ on $ \mathbb{R}^n $, there exist constants $ m, M > 0 $ such that:
\[m \|\beta\|_a \leq \|\beta\|_b \leq M \|\beta\|_a, \quad \forall \beta \in \mathbb{R}^n.\]
Noting that $\|\beta\|_{*,\Sigma(\unknown)} = \|\Sigma(\unknown)^{-\frac{1}{2}} \beta\|_*$ by definition, and invoking the equivalence of norms theorem for the $\ell_\infty$-norm, we have
\[\|\beta\|_{*,\Sigma(\unknown)} \leq M \|\Sigma(\unknown)^{-\frac{1}{2}} \beta\|_\infty~.\]
Additionally, recall that the induced matrix $\infty$-norm (the \emph{maximum absolute row sum norm}) of a matrix  $A \in \mathbb{R}^{d \times d}$ is defined as 
\[\|A\|_\infty = \max_{1 \leq i \leq d} \sum_{j=1}^d |a_{ij}|.\]
Using this definition, we observe that for the identity matrix $I_{d \times d}$,
\[ \|\Sigma(\unknown)^{-\frac{1}{2}} \beta\|_\infty \leq \|\Sigma(\unknown)^{-\frac{1}{2}} I_{d \times d}\beta\|_\infty~.\]
Indeed, by expanding the norm directly,
\[\|\Sigma(\unknown)^{-\frac{1}{2}} \beta\|_\infty  = \max_{i} \Big|\sum_{j=1}^d \Sigma(\unknown)^{-\frac{1}{2}}_{ij}\,\beta_j\Big|~,\]
whereas
\[ \|\Sigma(\unknown)^{-\frac{1}{2}} I_{d \times d}\beta\|_\infty = \max_{i} \sum_{j=1}^d \big|\Sigma(\unknown)^{-\frac{1}{2}}_{ij}\,\beta_j\big|~.\]
Moreover, since the $\ell_\infty$-norm atisfies the \emph{submultiplicativity} property, we have
\begin{align*}
    \|\Sigma(\unknown)^{-\frac{1}{2}} I_{d \times d}\beta\|_\infty
    &\leq \|\Sigma(\unknown)^{-\frac{1}{2}} \|_\infty \cdot \|I_{d \times d}\beta\|_\infty \\
    &= \|\Sigma(\unknown)^{-\frac{1}{2}} \|_\infty \cdot \|\beta\|_\infty .
\end{align*}

Applying this leads to the stated bound on the matrix-induced norm. 
\end{proof}

\emph{Example.} 
As a concrete example, consider the case where the primal norm adopted in the cost function is the commonly considered $\ell_2$-norm. Then, the dual norm is also an $\ell_2$-norm, and the following equivalence holds: $\|z\|_2 \leq \sqrt{d} \|z\|_\infty$ for $z\in\mathbb{R}^d$. Thus, we derive the upper bound:
\begin{align*}
    \|\beta\|_{*,\Sigma(\unknown)}  \leq \sqrt{d} ~~ \|\Sigma(\unknown)^{-\frac{1}{2}} \|_\infty  ~~ \|\beta\|_\infty. 
\end{align*} 

\section{C\&CG-Based Solution Approach: Master Problem, Subproblem, and Algorithm}
\label{app:ccg}
We present the master and subproblem formulations for a C\&CG-based solution approach, following the implementation framework of \citet{zeng2022two}.

\subsection{Master problem}\label{app:reformulation}

A single-level reformulation of the linear tri-model in~\eqref{eq:linear_tri_OF} is required to define the master problem. In doing so, we adopt the three mild assumptions presented in the main paper (Section~\ref{sec:linear_tri_level_final}). For completeness, we provide further explanation of these assumptions here before proceeding with the derivations.
Assumption (i) is required to maintain the two-stage framework. For example, if for a first-stage firm’s decision $\beta^k$ the uncertainty set turns out to be empty ($\unknownset(\beta^k) = \emptyset$), then the second-stage cost matrix $\Sigma(\unknown)$ is not defined, and the second-stage problem becomes undefined. Consequently, the two-stage problem is trivially unbounded. Assumption (ii) ensures boundedness of the random factor $\unknown$. This limits the influence of the first-stage decision on the manipulation cost in the second stage. In other words, it is not possible to drive $\unknown$ to infinity, which implies, by the definition of $\Sigma(\unknown)$ and our choice of $g(\unknown)$, that the second-stage cost cannot be zero, and manipulation cannot be free. Finally, assumption (iii) helps detect infeasibility of \eqref{eq:linear_tri_OF} through its associated relaxation. By the definition of most two-stage optimization problems, a first-stage decision $\beta^k$ is feasible if the second-stage problem is feasible for all $\unknown \in \unknownset(\beta^k)$, and infeasible otherwise. Thus, the two-stage problem is infeasible if no feasible first-stage decision exists. Hence,~\eqref{eq:linear_tri_OF} is infeasible if its relaxation is infeasible. Note that in ~\eqref{eq:linear_tri_OF} the second-stage problem is linear, and assumption (iii) guarantees that its dual is always feasible. This ensures the existence of a finite set of fixed extreme points and rays, which is critical for the ``Benders C\&CG" algorithm.  Specifically, these extreme points and rays can be systematically used to iteratively refine the master problem and to generate feasibility and optimality cuts.

In this appendix, we will step-by-step derive this master problem. Specifically, this is achieved by first dualizing the linearized second-stage problem in~\eqref{eq:rec_linearize} (Appendix~\ref{app:dual_rec}) to formulate the bilevel reformulation, leveraging the extreme points of the fixed feasible region. The resulting bilevel reformulation has a lower level complex disjoint bilinear program, which can be solved by enumerating on the extreme points and rays of the dual feasible region. This results in a linear bilevel reformulation (Appendix~\ref{app:bilevel}). Moreover, this linear bilevel reformulation has lower-level linear programs for each extreme point and extreme ray (of the dual second-stage feasible set), whose KKT condition-based sets are enumerated to formulate the single-level reformulation (Appendix~\ref{app:single_level}). By considering a subset of these extreme points and rays, we achieve a relaxation of the large-scale single-level reformulation and a lower bound on its optimal value, that is, the ``Master problem" (Appendix~\ref{app:master_problem}). 

\subsubsection{Dualizing the second-stage problem}\label{app:dual_rec}
As mentioned before, we start by dualizing the second-stage problem to have the bilevel reformulation. Let $X\in\mathbb{R}^{N \times d}$, where each row is $x_i^\top,$ for $i =1 ,2,\cdots,N$, and $Y\in \mathbb{Z}^N$. Given $s_1,s_2\in\mathbb{R}^N$. For example, if both $\Sigma_0,\text{and }Q(g(\cdot))$ are diagonal matrices, the row sum of $\Sigma(\unknown)^{-\frac{1}{2}}$ in constraint~\eqref{eq:const_sigma_infity}, can be written as,
\[
\sum_{j=1}^d Q(g(\unknown))_{ij}^{-\frac{1}{2}}\cdot\Sigma_{0,ij}^{-\frac{1}{2}}.
\] 
This is a sum of linear functions in $\unknown$, and simplifies to $g(\unknown_i)^{-\frac{1}{2}} \cdot \sigma^{-\frac{1}{2}}_{\Sigma_0,i}$. Under our assumption that $g(\unknown)^{-\frac{1}{2}}$ is linear in $\unknown$, the row sum is therefore linear in $\unknown$ as well. More generally, this row sum can be expressed compactly in vector form as 
\[
a_g \Sigma_0^{-\frac{1}{2}} \unknown,
\]
where $a_g$ encodes the coefficients induced by $g(\unknown)^{-\frac{1}{2}}$. Below, we rewrite the second-stage problem in vector form, annotating each constraint with its corresponding dual variable.
\begin{align}\label{eq:SS_problem_vector}
& \min \frac{\mathbf{\overrightarrow{1}}}{N}  s_{2} 
\end{align}
Subject to
\begin{subequations}
 \renewcommand{\theequation}{\eqref{eq:SS_problem_vector}.\alph{equation}}
 \begin{align*}
    &s_2& + (Y \odot X)(\beta^{'+}-\beta^{'-}) &+ (u_*MY)z ~ &\geq& ~ \overrightarrow{1}.&(\pi_0 \in \mathbb{R}^N)\\
    & &-(\beta^{'+}-\beta^{'-}) &+  \overrightarrow{1}t^q_2 ~ & \geq& ~  \overrightarrow{0}.&(\pi_1\in \mathbb{R}^d)\\
    & &(\beta^{'+}-\beta^{'-}) &+  \overrightarrow{1}t^q_2 ~ & \geq& ~  \overrightarrow{0}.&(\pi_2 \in \mathbb{R}^d)\\
    & & & \overrightarrow{1}t^{\unknown}_2 ~ & \geq& ~ -a_g \Sigma^{-1/2}_0 \unknown .&(\pi_3 \in \mathbb{R}^d)\\
    & & & \overrightarrow{1}t^{\unknown}_2 ~ & \geq& ~ a_g\Sigma^{-1/2}_0\unknown .&(\pi_4 \in \mathbb{R}^d)\\ 
    & &\mathbf{B}_2(\beta^{'+}-\beta^{'-})& ~ &\geq& ~ \mathbf{d} - \mathbf{B}_1(\beta^{+}-\beta^{-} )- \mathbf{E}\unknown.& (\pi_5 \in \mathbb{R}^n)\\
    & & & -t^q_{2} &\geq& ~-t^q_{2,max}.&(\pi_6 \in \mathbb{R})\\
    & & & -t^{\unknown}_{2} &\geq& ~-t^{\unknown}_{2,max}.&(\pi_7 \in \mathbb{R})\\
    & & & -z+t^{\unknown}_{2,max} t^q_2. &\geq& ~0. &(\pi_8 \in \mathbb{R})\\
    & & & -z+t^q_{2,max} t^{\unknown}_{2}&\geq& ~0.&(\pi_{9} \in \mathbb{R}) \\
    & & &z- t^q_{2,max} t^{\unknown}_{2} - t^{\unknown}_{2,max} t^q_2  &\geq& ~- t^{\unknown}_{2,max} t^q_{2,max}.&(\pi_{10} \in \mathbb{R})\\
    & &s_2&,\beta^{'+},\beta^{'-},z,t^{\unknown}_2,t^q_2 ~ &\geq& ~ 0.
\end{align*}
\end{subequations}
Here, $n$ denotes the number of constraints that characterize the feasible adjustment space of the classifier in the second stage, as outlined in Section~\ref{sec:firm-problem}. 
The following is the \textbf{LP} dual of the linearized second-stage problem: 
\setcounter{equation}{\value{equation}-1}

\begin{align}\label{eq:dual_LP_recourse}
&\max \mathbf{1}^\top\pi_0 +a_g(\Sigma_0^{-\frac{1}{2}} \unknown)^\top \pi_3  -  a_g( \Sigma_0^{-\frac{1}{2}} \unknown)^\top \pi_4+ (d-\mathbf{B}_1(\beta^{+}-\beta^{-}))^\top \pi_5 -(\mathbf{E}\unknown)^\top \pi_5 \\ \notag & \quad \quad \quad - t^q_{2,max}~\pi_6-t^{\unknown}_{2,max}~\pi_7 - t^{\unknown}_{2,max} t^q_{2,max}~\pi_{10}
\end{align}
Subject to
\begin{subequations}
 \renewcommand{\theequation}{\eqref{eq:dual_LP_recourse}.\alph{equation}}
 \begin{align}
 \label{eq:dual_rec_const_1}
&\pi_0  &\leq&~ &\frac{\overrightarrow{1}}{N}& .\\
& (Y \odot X)^\top \pi_0 + \mathbf{B}_2^\top\pi_5 - \pi_1  +\pi_2 &\leq&~ &0&.\\
& -(Y \odot X)^\top \pi_0 - \mathbf{B}_2^\top\pi_5 + \pi_1  -\pi_2 &\leq & &0&.\\
&u_*M(Y)^\top \pi_0 -\pi_8 - \pi_{9} + \pi_{10} &\leq& &0&.\\
& \overrightarrow{1}\pi_1 +\overrightarrow{1}\pi_2 - \pi_6 +t^{\unknown}_{2,max}\pi_8 -t^{\unknown}_{2,max}\pi_{10} &\leq& &0&.\\ \label{eq:dual_rec_const_last}
& \overrightarrow{1}\pi_3 +\overrightarrow{1}\pi_4 - \pi_7 +t^q_{2,max}\pi_{9} -t^q_{2,max}\pi_{10} &\leq& &0&.\\ \notag
& \pi_0,\pi_2,\pi_1,\pi_3,\pi_4,\pi_5,\pi_6,\pi_7,\pi_8,\pi_9,\pi_{10} &\geq& &0&.
\end{align}
\end{subequations}

Let $\pi$ denote the set of all dual variables, with each $\pi_j$ for $j=0,..,10$ is in the proper dimension. Note that the dual feasible region is independent of both the uncertainty in $\unknown$ and the first stage decision $(\beta^{+}-\beta^{-})$.  Let the feasible set of the dual-second-stage problem be denoted by $$\Pi=\Big\{\pi \geq 0, \textbf{ Equations~\eqref{eq:dual_rec_const_1}-\eqref{eq:dual_rec_const_last}}\Big\}.$$

\subsubsection{Bilevel Reformulation}\label{app:bilevel}
Given the dual second-stage problem from the previous section, we now proceed by writing the bilevel reformulation of the linear tri-level problem ~\eqref{eq:tri_model_approximation_1}.
\setcounter{equation}{\value{equation}-1}
\begin{align}\label{eq:bilevel_w_lower_level}
     \min_{\mathcal{S}_{t1}}\frac{1}{N} \sum_{i=1}^N s_{1,i} + \max \Big\{  \mathbf{1}^\top\pi_0+a_g(\Sigma_0^{-\frac{1}{2}} \unknown )^\top \pi_3 -a_g(\Sigma_0^{-\frac{1}{2}}\unknown)^\top \pi_4 + (d-\mathbf{B}_1(\beta^{+}-\beta^{-}))^\top \pi_5 \\ \notag -(\mathbf{E}\unknown)^\top \pi_5  - t^q_{2,max}~\pi_6 - t^{\unknown}_{2,max}~\pi_7 - t^{\unknown}_{2,max} t^q_{2,max}~\pi_{10}: \unknown \in \unknownset(\beta), \pi \in \Pi \Big\}. 
\end{align}
Note, as discussed before, this bilevel reformulation has a lower-level complex disjoint bilinear program. \citet{zeng2022two} reformulate this as a linear bilevel reformulation by enumerating on the extreme points and rays of $\Pi$. Let $\mathcal{P}_\Pi, \mathcal{R}_\Pi$ be the set of extreme points and extreme rays of $\Pi$, respectively, with $K_p = |\mathcal{P}_\Pi|$ and $K_r = |\mathcal{R}_\Pi|$. By enumerating, we can further get a simpler but large-scale linear bilevel reformulation as follows:
\begin{align}\label{eq:linear_bilevel_refo}
    \min \frac{1}{N} \sum_{i=1}^N s_{1,i} + \eta  
\end{align}
Subject to
\begin{subequations}
 \renewcommand{\theequation}{\eqref{eq:linear_bilevel_refo}.\alph{equation}}
 \begin{align}
    & \quad v_1 \in \mathcal{S}_{t1},\\     \label{eq:const1_linear_bilevel_refo}
    & \Bigg\{ \eta \geq \mathbf{1}^\top\pi_0 + (d - \mathbf{B}_1(\beta^{+}-\beta^{-}))^\top \pi_5 - t^q_{2,max}~\pi_6 - t^{\unknown}_{2,max}~\pi_7- t^{\unknown}_{2,max} t^q_{2,max}~\pi_{10} \\ \notag & \quad \quad  \quad  + \max_{{\unknown} \in \unknownset(\beta)} \{a_g(\Sigma_0^{-\frac{1}{2}} \unknown )^\top \pi_3  +  a_g(-\Sigma_0^{-\frac{1}{2}}\unknown)^\top \pi_4+ (-\mathbf{E}\unknown)^\top \pi_5 \}: \unknown \in \unknownset(\beta),\pi \in \Pi \} \Bigg\} \forall \pi \in \mathcal{P}_\Pi,\\ \label{eq:const2_linear_bilevel_refo}
    &\Bigg\{ \mathbf{1}^\top\gamma_0 + (d - \mathbf{B}_1(\beta^{+}-\beta^{-}))^\top \gamma_5 - t^q_{2,max}~\gamma_6 - t^{\mathbf{\tilde{v}}}_{2,max}~\gamma_7 - t^{\mathbf{\tilde{v}}}_{2,max} t^q_{2,max}~\gamma_{10}  \\ \notag 
     & \quad \quad + \max_{{\mathbf{\tilde{v}}} \in \unknownset(\beta)} \{a_g(\Sigma_0^{-\frac{1}{2}} \mathbf{\tilde{v}} )^\top \gamma_3  +  a_g(-\Sigma_0^{-\frac{1}{2}}\mathbf{\tilde{v}})^\top \gamma_4 + (-\mathbf{E}\mathbf{\tilde{v}})^\top \gamma_5 \}\leq 0 \Bigg\} \forall \gamma \in \mathcal{R}_\Pi.
\end{align}
\end{subequations}

Note that the variable $\mathbf{\tilde{v}}$ is an alias of $\unknown$. Given the lower-level LPs appears in \eqref{eq:const1_linear_bilevel_refo} and \eqref{eq:const2_linear_bilevel_refo}, let \textbf{LP}$(\beta,U): \max\{a_g(\Sigma_0^{-\frac{1}{2}} \unknown )^\top U_3  +  a_g(-\Sigma_0^{-\frac{1}{2}}\unknown)^\top U_4+(-\mathbf{E}\unknown)^\top U_5: \unknown \in \unknownset(\beta)\}$. Using the KKT conditions let $\mathcal{O}\Omega(\beta,\pi^k)$ denotes the optimal solution set of \textbf{LP}$(\beta,\pi^k)$. Then,
\setcounter{equation}{\value{equation}-1}
\begin{align}\label{eq:LP_opt_KKT}
    \mathcal{O}\Omega(\beta,\pi^k) = \left\{
    \begin{aligned}
        & \mathbf{F}(\beta)~ \unknown^k \leq \mathbf{h} + \mathbf{G}(\beta^{+}-\beta^{-}),\\
        &\mathbf{F}(\beta)^\top \lambda^k \geq+ a_g \Sigma_0^{-\frac{1}{2}}\pi_3^k - a_g \Sigma_0^{-\frac{1}{2}}\pi_4^k -\mathbf{E}\pi^k_5 \\
        & \lambda^k \odot \left(\mathbf{h} +\mathbf{G}(\beta^{+}-\beta^{-}) - \mathbf{F}(\beta)~ \unknown^k\right) = 0,\\
        &\unknown^k \odot \left(\mathbf{F}(\beta)^\top \lambda^k - a_g \Sigma_0^{-\frac{1}{2}}\pi_3^k + a_g \Sigma_0^{-\frac{1}{2}}\pi_4^k + \mathbf{E}^\top \pi^k_5 \right) = 0\\
        & \unknown^k,\, \lambda^k \geq 0
    \end{aligned}
    \right\},
\end{align}
where $\lambda^k$ represents the dual variable of constraints $\unknownset(\beta)$. Similarly, we can define $\mathcal{OV}(\beta,\gamma^l)$ to be the set of optimal solution for \textbf{LP}$(\beta,\gamma^l)$ using KKT conditions.

\subsubsection{Single Level Reformulation}\label{app:single_level}
The bilevel reformulation in the previous section (Appendix~\ref{app:bilevel}) is also equivalent to the single-level optimization problem in this section. Using the sets $\mathcal{O}\Omega$ and $\mathcal{OV}$ defined in the previous section, we can write a single-level reformulation,
\begin{align}\label{eq:single_level_refo2}
    &\min \frac{1}{N} \sum_{i=1}^N s_{1,i}  + \eta  
\end{align}
Subject to
\begin{align*}
    & \quad \quad  v_1 \in \mathcal{S}_{t1}, \\ \notag 
    & \quad \quad \Bigg\{ \eta \geq  \mathbf{1}^\top\pi^k_0 + (d - \mathbf{B}_1(\beta^{+}-\beta^{-}))^\top \pi^k_5 - t^q_{2,max}~\pi^k_6 - t^{\unknown^k}_{2,max}~\pi^k_7 - t^{\unknown^k}_{2,max} t^q_{2,max}~\pi^k_{10}  \\ \notag 
     & \quad \quad  +a_g(\Sigma_0^{-\frac{1}{2}} \unknown^k )^\top \pi^k_3  -  a_g(\Sigma_0^{-\frac{1}{2}}\unknown^k)^\top \pi^k_4 -(\mathbf{E}\unknown)^\top \pi^k_5\Bigg\},~ k=1,\cdots,K_p, \\\notag 
        & \quad \quad (\unknown^k,\lambda^k) \in \mathcal{O}\Omega(\beta,\pi^k),~ k=1,\cdots,K_p\\\notag 
    & \quad \quad  \Bigg\{\mathbf{1}^\top\gamma^l_0 + (d - \mathbf{B}_1(\beta^{+}-\beta^{-}))^\top \gamma^l_5 - t^q_{2,max}~\gamma^l_6 - t^{\mathbf{\tilde{v}}^l}_{2,max}~\gamma^l_7 - t^{\mathbf{\tilde{v}}^l}_{2,max} t^q_{2,max}~\gamma^l_{10}  \\ \notag 
     & \quad \quad +a_g(\Sigma_0^{-\frac{1}{2}} \mathbf{\tilde{v}}^l )^\top \gamma^l_3  -  a_g(\Sigma_0^{-\frac{1}{2}}\mathbf{\tilde{v}}^l)^\top \gamma^l_4 -(\mathbf{E}\mathbf{\tilde{v}^l})^\top \gamma^l_5  \}\leq 0 \Bigg\}, ~ l=1,\cdots,K_r\\ \notag 
    & \quad \quad (\mathbf{\tilde{v}}^l,\xi^l) \in \mathcal{OV}(\beta,\gamma^l_1),~ l=1,\cdots,K_r
\end{align*}

\subsubsection{The Master Problem}\label{app:master_problem}
As mentioned before, the single-level reformulation includes the use of all extreme points and rays of the dual-second-stage feasible region ($\Pi$). Hence, for a subset of these extreme points and the extreme rays sets, we have a relaxation of the single-level reformulation (\eqref{eq:single_level_refo2}) which is the ``Master problem". Let $\mathcal{\hat{P}}_\Pi \subseteq \mathcal{P}_\Pi$, and $\mathcal{\hat{R}}_\Pi \subseteq \mathcal{R}_\Pi$ be the subset of the extreme points and rays, receptively. We can lower bound the problems in the following \textbf{``Master problem"}: 
\begin{align}\label{eq:mast_pro_2}
    &\min \frac{1}{N} \sum_{i=1}^N s_{1,i}  + \eta 
\end{align}
Subject to
\begin{subequations}
 \renewcommand{\theequation}{\eqref{eq:mast_pro_2}.\alph{equation}}
 \begin{align}
    &   \quad  \quad v_1 \in \mathcal{S}_{t1}, \\ \label{eq:optimality_cuts_1}
    &\quad  \quad \Bigg\{ \eta \geq \mathbf{1}^\top\pi_0 + (d - \mathbf{B}_1(\beta^{+}-\beta^{-}))^\top \pi_5 - t^q_{2,max}~\pi_6 - t^{\unknown}_{2,max}~\pi_7 - t^{\unknown}_{2,max} t^q_{2,max}~\pi_{10}  \\ \notag 
     & \quad \quad  \quad \quad  +a_g(\Sigma_0^{-\frac{1}{2}} \unknown^\pi )^\top \pi_3  -  a_g(\Sigma_0^{-\frac{1}{2}}\unknown^\pi)^\top \pi_4 -(\mathbf{E}\unknown^\pi)^\top \pi_5\Bigg\} \quad \forall \pi \in \mathcal{\hat{P}}_\Pi,\\ \label{eq:optimality_cuts_last} 
     &\quad  \quad (\unknown^\pi,\lambda^\pi) \in \mathcal{O}\Omega(\beta,\pi^\pi),~\forall \pi \in \hat{\mathcal{P}}_\Pi\\ \label{eq:feasiblity_cuts_1}
    &\quad \quad \Bigg\{\mathbf{1}^\top\gamma_0 + (d - \mathbf{B}_1(\beta^{+}-\beta^{-}))^\top \gamma_5 - t^q_{2,max}~\gamma_6 - t^{\mathbf{\tilde{v}}}_{2,max}~\gamma_7 - t^{\mathbf{\tilde{v}}}_{2,max} t^q_{2,max}~\gamma_{10}  \\ \notag 
     & \quad \quad \quad \quad +a_g(\Sigma_0^{-\frac{1}{2}} \mathbf{\tilde{v}^\gamma} )^\top \gamma_3  -  a_g(\Sigma_0^{-\frac{1}{2}}\mathbf{\tilde{v}^\gamma})^\top \gamma_4  -(\mathbf{E}\mathbf{\tilde{v}^\gamma})^\top \gamma_5 \Bigg\} \leq 0 \quad \forall \gamma \in \mathcal{\hat{R}}_\Pi. \\  \label{eq:feasiblity_cuts_last}
    & \quad \quad (\mathbf{\tilde{v}}^\gamma,\xi^\gamma) \in \mathcal{OV}(\beta,\gamma^\gamma_1),~\forall \gamma \in \hat{\mathcal{R}}_\Pi
\end{align}
\end{subequations}

\subsection{Subproblems}\label{app:subproblems}
\paragraph{Subproblem 1 (SP1)} The first subproblem (SP1) is formulated to check the feasibility of the current first-stage $\beta^*$, which by definition is feasible if the second-stage problem is feasible to all scenarios in the uncertainty set $\unknownset(\beta^*)$.
\setcounter{equation}{\value{equation}-1}
\begin{align} \label{eq:SP1}
\eta_f(\mathbf{\beta}^*) = \max_{\unknown \in \unknownset(\mathbf{\beta}^*)} &\min \mathbf{1}^\top \tilde{\mathbf{S}}_0+\mathbf{1}^\top \tilde{\mathbf{S}}_1+\mathbf{1}^\top \tilde{\mathbf{S}}_2+\mathbf{1}^\top \tilde{\mathbf{S}}_3+\mathbf{1}^\top \tilde{\mathbf{S}}_4+\mathbf{1}^\top \tilde{\mathbf{S}}_5+ \tilde{\mathbf{S}}_6 + \tilde{\mathbf{S}}_7 \\ \notag & +\tilde{\mathbf{S}}_8+\tilde{\mathbf{S}}_{9} + \tilde{\mathbf{S}}_{10} 
\end{align}
\quad \quad \quad \quad   S.t.
\begin{align*}
& \quad \quad \mathbf{\tilde{S}}_0+s_{2,i} + y_i \left( (\beta^{'+}-\beta^{'-})^{\top} x_i + u_* M z\right) &\geq& &1 , \quad i = 1, \ldots, N,& \\\notag 
& \quad \quad \mathbf{\tilde{S}}_1-(\beta^{'+}-\beta^{'-})_j &\geq& &{-t^q_2} , \quad j = 1, \ldots, d&, \\\notag 
& \quad \quad \mathbf{\tilde{S}}_2+(\beta^{'+}-\beta^{'-})_j &\geq& &-t^q_2 , \quad j = 1, \ldots, d&, \\\notag  
& \quad \quad \mathbf{\tilde{S}}_3 - \sum_j [\Sigma(\unknown)^{-\frac{1}{2}}]_{ij} &\geq& &-t^{\unknown}_2 , \quad j = 1, \ldots, d&, \\\notag 
& \quad \quad \mathbf{\tilde{S}}_4 +  \sum_j [\Sigma(\unknown)^{-\frac{1}{2}}]_{ij}{-t^{\unknown}_2} &\geq& &-t^{\unknown}_2 , \quad j = 1, \ldots, d&, \\\notag 
&  \quad \quad \mathbf{\tilde{S}}_5 + \mathbf{B}_2 (\beta^{'+}-\beta^{'-})  &\geq& &\mathbf{d} - \mathbf{B}_1 \mathbf{\beta}^* - \mathbf{E} \unknown&, \\ \notag
& \quad \quad \mathbf{\tilde{S}}_6 - t^q_{2} &\geq& -&t^q_{2,max}&\\ \notag 
& \quad \quad \mathbf{\tilde{S}}_7 - t^{\unknown}_{2} &\geq& &-t^{\unknown}_{2,max}&\\ \notag 
& \quad \quad \mathbf{\tilde{S}}_8 - z&\geq& & -t^{\unknown}_{2,max} t^q_2&\\ \notag 
& \quad \quad \mathbf{\tilde{S}}_{9} - z&\geq& &-t^q_{2,max} t^{\unknown}_{2}& \\ \notag 
& \quad \quad \mathbf{\tilde{S}}_{10} +z&\geq& &t^q_{2,max} t^{\unknown}_{2} + t^{\unknown}_{2,max} t^q_2 - t^{\unknown}_{2,max} t^q_{2,max}&\\ \notag 
&\quad \quad \tilde{\mathbf{S}}_j,z,s_{2,i},\beta^{'+},\beta^{'-},t^q_2,t^{\unknown}_2 &\geq& &0,~ \text{for } i = 1,2,...,N~, \text{for } j=1,2,..,12&.
\end{align*}

Accordingly, the linearized tri-model in Equation~\eqref{eq:tri_model_approximation_1} and all its equivalences are feasible for $\mathbf{\beta}^*$ if and only if \textbf{SP1} objective value is zero ($\eta_f(\mathbf{\beta}^*) = 0$).
 
\paragraph{[Case A]:} When $\eta_f(\boldsymbol{\beta}^*) = 0$, meaning that the second-stage problem 
is feasible for all $\unknown \in \unknownset(\boldsymbol{\beta}^*)$, We then solve the second subproblem (SP2) to assess the worst-case  performance of $\beta^*$ by identifying the worst-case realization $\unknown_s^*$ and its corresponding second-stage cost $\eta_s(\beta^*)$.

\begin{equation}\label{eq:SP2}
\text{(SP2)} \quad \eta_s(\mathbf{\beta}^*) = \max_{\unknown \in \unknownset(\mathbf{\beta}^*)} \min \left\{ \frac{1}{N} \sum_{i=1}^N s_{2,i} : \text{Equation~\eqref{eq:hinge_loss_second}-\eqref{eq:recource_linearCons_last}}) \right\}
\end{equation}

\textbf{(SP2)} can be addressed by reformulating the minimization problem via its KKT conditions or equivalently through its dual problem. In both computational approaches, the optimal dual variables, denoted by $\pi^*$, correspond to an extreme point of $\Pi$. From the lower-level linear program in the bilevel reformulation, denoted as \textbf{LP$(\beta,U)$}, it follows that

\begin{align}\label{eq:dual_at_current_opt}
\eta_s(\mathbf{\beta}^*) =  \mathbf{1}^\top\pi^*_0 + (d - \mathbf{B}_1(\beta^{+*}-\beta^{-*}))^\top \pi^*_5 - t^q_{2,max}~\pi^*_6 - t^{\unknown}_{2,max}~\pi^*_7 - t^{\unknown}_{2,max} t^q_{2,max}~\pi^*_{10}\\ \notag + \text{LP}(\mathbf{\beta}^*, \pi^*).
\end{align}

\paragraph{[Case B]:} Conversely, if $\eta_f(\boldsymbol{\beta}^*) > 0$, the optimal solution to \eqref{eq:SP1}, denoted by $\unknown_f^*$, renders the second-stage problem infeasible. In this situation, we solve the third subproblem \textbf{(SP3)}, which corresponds to the dual of the second-stage problem evaluated at $\unknown_f^*$.

\begin{align}\label{eq:SP3}
\text{(SP3)} \quad \max \Big\{  \mathbf{1}^\top\pi_0 + (d - \mathbf{B}_1(\beta^{+*}-\beta^{-*}))^\top \pi_5 - t^q_{2,max}~\pi_6 - t^{\unknown}_{2,max}~\pi_7  
- t^{\unknown}_{2,max} t^q_{2,max}~\pi_{10}  \\ \notag
+ a_g(\Sigma_0^{-\frac{1}{2}} \unknown^*_f )^\top \pi_3 -  a_g(\Sigma_0^{-\frac{1}{2}}\unknown^*_f)^\top \pi_4 -(\mathbf{E}\unknown^*_f)^\top \pi_5 : \pi \in \Pi \Big\}
\end{align}

Note that \textbf{(SP3)} is unbounded with respect to $(\beta^*, \unknown_f^*)$, its solution identifies an extreme ray of $\Pi$, denoted by $\gamma^*$, along which the objective value diverges to infinity. By convention, the corresponding worst-case second-stage cost $\eta_s(\beta^*)$ is set to $+\infty$.

\subsection{Algorithm}\label{app:CCG_steps}

When the two-stage robust optimization (2-Stg RO) problem has a decision-independent uncertainty (DIU) set, classical Column-and-Constraint Generation (C\&CG) iteratively solves a master problem by adding one recourse problem per identified worst-case scenario. As shown in \citet{zeng2022two}, for 2-Stg RO with decision-dependent uncertainty (DDU) and a single-level reformulation (Appendix~\ref{app:single_level}), this strategy can be extended using a parametric framework.

The resulting \textbf{Benders C\&CG algorithm} dynamically generates worst-case scenarios and the corresponding dual-based optimality or feasibility cuts, refining the master problem over iterations. Below are the algorithmic steps:

\begin{enumerate}
    \item \textbf{Initialization}: Set lower bound \( \text{LB} = -\infty \), upper bound \( \text{UB} = +\infty \), iteration index \( k= 1 \), cut sets \( \hat{\mathcal{P}}_\Pi, \hat{\mathcal{R}}_\Pi = \emptyset \), and choose a convergence tolerance \( \epsilon > 0 \).
    
    \item \textbf{Master Problem (MP)}: Solve the master problem in ~\eqref{eq:mast_pro_2} (Appendix~\ref{app:master_problem}) to obtain candidate solution \( (v_1^k, \eta^k) \). Set \( \text{LB} = \frac{1}{N} \sum_{i=1}^N s^k_{1,i}  + \eta^k \).
    
    \item \textbf{Subproblem 1 (SP1)}: For given \( \beta^k\), solve subproblem (SP1) in ~\eqref{eq:SP1} (Appendix~\ref{app:subproblems}) to compute \( \eta_f(\beta^k) \) and corresponding scenario \( \unknown^k_f \).
    
    \item \textbf{Cut Generation}:
    \begin{itemize}
        \item \textbf{(Case A)}: If \( \eta_f(\beta^k) = 0 \), solve subproblem (SP2) in~\eqref{eq:SP2} to obtain \( \eta_s(\beta^k) \), scenario \( \unknown_s^k \), optimal dual solution \( \pi^k \). Update \( \hat{\mathcal{P}}_\Pi \gets \hat{\mathcal{P}}_\Pi \cup \{\pi^k\} \). Add optimality cuts from ~\eqref{eq:optimality_cuts_1}--\eqref{eq:optimality_cuts_last}.
        
        \item \textbf{(Case B)}: If \( \eta_f(\beta^k) > 0 \), solve subproblem (SP3) in~\eqref{eq:SP3} to obtain extreme ray \( \gamma^k \), and set \( \eta_s(\beta^k) = +\infty \). Update \( \hat{\mathcal{R}}_\Pi \gets \hat{\mathcal{R}}_\Pi \cup \{\gamma^k\} \). Add feasibility cuts from ~\eqref{eq:feasiblity_cuts_1}--\eqref{eq:feasiblity_cuts_last}.
    \end{itemize}

    \item \textbf{Upper Bound Update}: Set \( \text{UB}^k= \frac{1}{N} \sum_{i=1}^N s^k_{1,i} + \eta_s(\beta^k) \), and update \( \text{UB} = \min\{\text{UB}, \text{UB}^k\} \).
    
    \item \textbf{Convergence Check}: If \( \text{UB} - \text{LB} \leq \epsilon \), terminate and return \( \beta^k\) as the optimal first-stage solution. Otherwise, increment \( k\gets k+ 1 \) and return to Step 2.
\end{enumerate}
\section{Implementation, Experimental Details, and Additional Numerical Experiments}\label{app:numerical_appendix}

\subsection{Implementation and experimental details}\label{app:numerical_ex_details}
\subsubsection{University Admission Data Generation}\label{app:data-setup} We use two features: $x_1$ (SAT score) and $x_2$ (number of extracurricular activities). Here, we provide additional details on how these features are generated. For the SAT feature ($x_1$), we rely on the nationwide score distribution reported in the SAT Suite of Assessments Annual Report \citep{collegeboard2023sat}, which provides participation statistics, score intervals (e.g., 1400–1600, 1200–1390), and overall summary statistics (mean and standard deviation). This empirical distribution forms the basis for generating SAT scores in our experiment (Figure~\ref{fig:sat-dist}). For the extracurricular activities feature ($x_2$), we draw on data from \cite{park2024extracurricular}, which analyzes student-reported counts of activities from the Common Application (“Common App”) database (Figure~\ref{fig:ec-dist}). To maintain temporal consistency, we align SAT data with the years examined in \cite{park2024extracurricular} (2018 and 2019). Finally, both features are standardized using $z$-score normalization—subtracting the mean and dividing by the standard deviation—so that each has mean zero and unit variance. The resulting dataset consists of two-dimensional features $x \in \mathbb{R}^2$, with each component sampled from its respective empirical distribution. 
\begin{figure}
    \centering    
    \begin{subfigure}[ht]{0.48\linewidth}
        \centering
        \includegraphics[width=\linewidth]{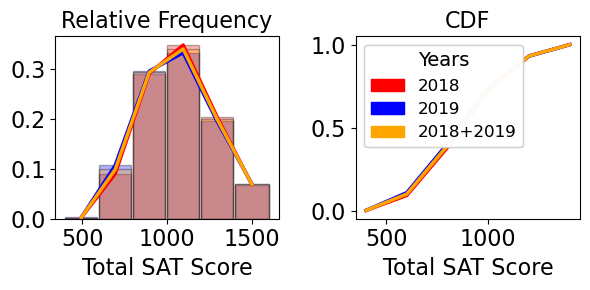}
        \caption{Distribution of total SAT scores.}
        \label{fig:sat-dist}
    \end{subfigure}
    \hfill
    \begin{subfigure}[ht]{0.48\linewidth}
        \centering
        \includegraphics[width=\linewidth]{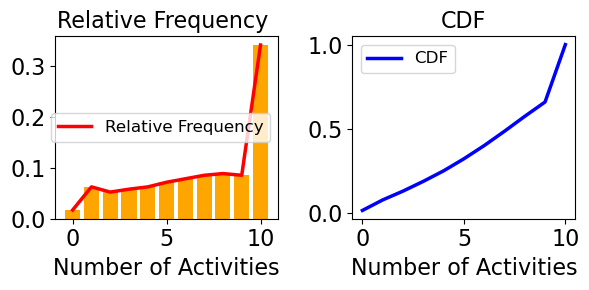}
        \caption{Distribution of extracurricular activities reported.}
        \label{fig:ec-dist}
    \end{subfigure}
    \caption{University admission features distribution}
    \label{fig:school_admi_feature_dis}
\end{figure}

Admissions decision-making is informed by empirical weightings from \cite{nacac_factors2025}, which reports that academic factors such as GPA, curriculum rigor, and standardized test scores carry substantially more weight than extracurricular activities in undergraduate admissions. On average, $45.17\%$ of colleges report attributing considerable importance to SAT scores, compared to $6.83\%$ for extracurricular activities. Normalizing these proportions to relative weights yields approximately $0.85$ for SAT scores and $0.15$ for extracurricular activities. Labels are then assigned according to the following linear decision boundary defined by these weights. 
\[
y = 
\begin{cases}
1 & \text{if } 0.85x_1 + 0.15x_2 > \theta, \\
-1 & \text{otherwise},
\end{cases}
\]
Here, $\theta$ represents a plausible qualification threshold. In practice, most highly selective universities admit students with SAT scores near the 75th percentile or higher \citep{crimson2025goodSAT}. For extracurricular activities (ECs), we adopt a benchmark of $4$–$5$ activities. This choice is motivated by evidence that the Common App allows up to $8$–$10$ activities, and recent work suggests that lowering this maximum may help mitigate equity concerns \citep{park2024extracurricular}. Accordingly, we define the qualification threshold $\theta$ as a composite of a 75th-percentile SAT score (1210) and a moderate extracurricular profile of $4$–$5$ activities. This provides a principled proxy for admissions decisions while ensuring that the classification task remains both realistic and non-trivial. To simulate noisy environments, we adopt \emph{feature-dependent noise} \citep{zhang2021learning, smart2023bootstrapping}, in which labels are flipped with probability $\mathbb{P}_{\text{noise}}(x)$ proportional to a point’s proximity to the decision boundary. In this formulation, samples closer to the threshold are more likely to be mislabeled, reflecting the ambiguity inherent in real admissions decisions. The final dataset consists of $10{,}000$ points.
 \[
\mathbb{P}_{\text{noise}}(x) = 0.5 \exp\!\left(-\left(\tfrac{0.85x_1+0.15x_2}{0.1}\right)^2\right).
\]

\subsubsection{Cost and Uncertainty Details }\label{app:cost_ddu_details}
\paragraph{First-stage cost:} We fix the cost matrix as $\Sigma_0 = \mathrm{diag}(3,6)$, which implies that manipulation of the extracurricular activities feature ($x_2$) is twice as expensive as manipulation of the SAT feature ($x_1$). This assumption is consistent with empirical evidence. Sustained extracurricular participation (e.g., sports) often incurs annual costs of \$700–\$1,500 for a single primary activity \citep{aspen2025sports1}. Because students typically engage in such activities across multiple years of high school (3–4 years) to demonstrate continuity and leadership, families may spend \$2,000–\$6,000 or more on one activity alone. By contrast, SAT preparation incurs lower or moderate costs, typically ranging from \$100–\$1,400 \citep{collegeinvestor2024satprep}, with free or low-cost options also available. Moreover, SAT preparation expenses are generally concentrated within a single year rather than sustained throughout high school.

\paragraph{More explanation on decision-dependent uncertainty.}
To construct the decision-dependent uncertainty set, we fix the interaction matrix $\mathbf{F}$ to be constant and independent of the first-stage decision $\beta$. This design ensures that $\beta$ influences only the \emph{bounds} of the uncertainty set, through the term $\mathbf{G}\beta$, rather than altering the relative interaction between $\unknown_1$ and $\unknown_2$. Given, \[
\mathbf{F} = \begin{bmatrix} 0.9 & 0.3 \\ 0.3 & 0.6 \end{bmatrix}, \quad
\mathbf{h} = \begin{bmatrix} 0.5 \\ 0.5 \end{bmatrix}, \quad
\mathbf{G} = \begin{bmatrix} 1.5 & 0.9 \\ 0.6 & 1.5 \end{bmatrix}.
\]
The diagonal entries of $\mathbf{G}$ are set larger than the off-diagonal entries, reflecting that $\beta_i$ primarily governs manipulation costs for its corresponding feature $x_i$, rather than for $x_j$ with $i \neq j$. The baseline vector $\mathbf{h} = (0.5,0.5)^\top$ captures general access to resources (e.g., widespread availability of SAT preparation materials and extracurricular opportunities). For instance, the inequality $0.9\unknown_1+0.3\unknown_2 \leq 0.5+1.5\beta_1+0.9\beta_2$ demonstrates how policy weights shape the relative ease of manipulation across features. Placing greater weight on SAT scores ($\beta_1$) makes SAT manipulation ($\unknown_1$) relatively easier, as students shift resources toward SAT preparation, which is typically cheaper at the margin than sustained extracurricular participation. However, manipulation is not fully independent across features: the cross-effect coefficient ($0.9$) indicates that increasing $\beta_2$ also relaxes the SAT constraint, though less strongly. Conversely, raising the weight on extracurriculars ($\beta_2$) reduces the cost of manipulating $\unknown_2$, with some spillover from SAT emphasis captured by the cross-effect coefficient ($0.6$). These coupled constraints emphasize that admissions criteria are not isolated levers: increasing emphasis on one dimension necessarily reshapes incentives in the other. In practice, this captures the substitutability of effort, as students reallocate limited time and financial resources between SAT preparation and extracurricular involvement in response to admissions signals. The model thus reflects both direct responsiveness (own-feature effects) and cross-substitution (spillovers), making explicit the interdependence between academic and non-academic factors in holistic admissions.

\subsubsection{Optimization procedures for DI baseline and second-stage classifiers}\label{app:proceedure_optim}
The first-stage decision-independent baseline classifier, $\beta^{\text{DI}}$, is trained via mini-batch gradient descent (batch size $100$) to minimize the strategic hinge loss in \eqref{eq:RR_strageic_hinge_loss} under the fixed cost matrix $\Sigma_0$. Convexity is essential for applying gradient descent; therefore, we include a regularization term of the form $R_{\Sigma_0}(\beta) + \lambda_{reg} u_* \|\beta\|_{*, \Sigma_0}$. 
By Proposition~4.3 of \citet{rosenfeld2023one}, adding this regularizer and selecting $\lambda_{\text{reg}} \geq \mathbb{P}_{P{XY}}(Y = 1)$ guarantees convexity. Note also that the second-stage optimal classifiers for both models are obtained in the same manner but with respect to realized cost matrices $\Sigma(\unknown^k)$, where $\unknown^k \in \unknownset(\beta)$.

\subsubsection{Computational setup}
All experiments were conducted on a personal laptop (HP OMEN Slim 16) with an Intel Core Ultra 7 255H CPU and 32 GB RAM, running Windows 11 (64-bit). No GPU acceleration was used. The implementation was in Python 3.8.3 with Gurobi 12.0.1.  Our implementation was written in Python 3.8.3 and used Gurobi 12.0.1 (via GurobiPy).

\subsection{Additional numerical experiments with an off-diagonal cost matrix}\label{app:off-diag-cost-mat}
In this section, we conduct two experiments to generalize the cost matrices considered in the simulations, both including off-diagonal elements: one with positive correlations and another with negative correlations between features. We set the first-stage cost matrix to
\[\Sigma_0 = 
\begin{bmatrix}
3 & 1\\
1 & 6
\end{bmatrix}.
\]
The diagonal entries represent the marginal cost of moving each feature independently.  If $v = x' - x = (v_1,v_2)^\top$, then
\[
v^\top \Sigma_0 v = 3 v_1^2 + 2 v_1 v_2 +6 v_2^2. 
\] 
Ignoring the cross term for a moment, a unit change in $x_2$ contributes $6$ units of quadratic cost, whereas a unit change in $x_1$ contributes only $3$; manipulating $x_2$ is therefore roughly \emph{twice as costly} as manipulating $x_1$. The off-diagonal entries capture \emph{interaction} between coordinates.   The term $2 v_1 v_2$ implies that the cost of moving both features simultaneously is not merely the sum of their marginal cost. If $2 v_1 v_2 > 0$ (same sign), the total cost increases; coordinated increases are penalized more heavily. However, if $v_1$ and $v_2$ have opposite signs, then the cross term becomes negative and partially offsets the marginal costs, making some compensating movements cheaper. 

Geometrically, the manipulation costs depend on both the magnitude and the \emph{direction} of the change. In the second stage, the cost matrix is a function of the uncertainty vector $\unknown = (\unknown_1,\unknown_2)^\top$ via the function $g(\unknown)=\begin{bmatrix}
    g_1(\unknown_1)\\
    g_2(\unknown_2)
\end{bmatrix}$, which is component-wise separable, similar to the n the diagonal-cost experiments, we set $g_i(\unknown_i)=\frac{1}{\unknown_i^2}$. Thus, each uncertainty component $\unknown_i$ affects only the $i$th entry of $g(\unknown)$. Let the eigen-decomposition of $\Sigma_0$ given by:
\[\Sigma_0= V \Lambda V^\top, \qquad \Lambda= \operatorname{diag}(\lambda_1,\lambda_2),
\] 
where $V$ is orthogonal and $\lambda_1,\lambda_2 > 0$. 
The off-diagonal second-stage cost matrix is given by 
\[
\Sigma(\unknown)= Q(g(\unknown))\cdot\Sigma_0, \qquad Q(g(\unknown)):= V \operatorname{diag} \big(g(\unknown)\big) V^\top \Sigma_0^{-1}.
\]

\paragraph{Positive correlation.}
For the positive-correlation case, we use the same decision-dependent uncertainty set as in the diagonal-cost experiments. All entries of $\mathbf{G}$ are positive: the diagonal terms indicate how each coefficient $\beta_i$ expands its own uncertainty range, while the off-diagonal terms represent cross-feature spillovers (e.g., increasing weight on extracurriculars slightly enlarges SAT-manipulation uncertainty). Table~\ref{tab:offdiag_pos} shows that our analysis continues to hold under an off-diagonal cost matrix with positive correlation.

Relative to the diagonal-cost case, we observe substantially greater average manipulation by unqualified agents during both stages when the cost matrix contains off-diagonal elements, under both the DD and DI classifiers (13.44 vs.\ 19.04, 24.64 vs.\ 21.36, 2.30 vs.\ 10.07, and 16.46 vs.\ 25.67). This increased manipulation leads to larger total losses for the classifiers as well (34.23 vs.\ 44.09 and 49.66 vs.\ 60.55). The reason is that the off-diagonal matrix alters the structure of manipulation costs: rather than penalizing each feature independently, the cost now depends on how features are adjusted jointly, and the off-diagonal entries generate directions in feature space that are effectively cheaper to move along. Nonetheless, even this increase in second-stage manipulation by unqualified agents remains lower than that observed in the first stage for the DD classifier (10.07 vs.\ 19.04).

\begin{table}[htbp!]
\centering
\caption{Off-diagonal cost matrix with positive correlation.}
\label{tab:offdiag_pos}
\small
\setlength{\tabcolsep}{3pt}
\begin{tabularx}{\textwidth}{l *{6}{>{\centering\arraybackslash}X}}
\toprule
Metric & First-stage DD & First-stage DI & Second-stage DD & Second-stage DI & Total DD & Total DI \\
\midrule
0--1 Loss & 28.04$\pm$0.79 & 25.40$\pm$0.75 & 16.05$\pm$1.42 & 35.15$\pm$3.54 & \textbf{44.09} & 60.55 \\
Manipulations & 39.72$\pm$0.88 & 32.68$\pm$0.91 & 10.44$\pm$1.47 & 28.62$\pm$3.59 & \textbf{50.16} & 61.30 \\
Qualified Manip. & 20.68$\pm$0.81 & 11.20$\pm$0.55 & 0.35$\pm$0.08 & 2.94$\pm$0.80 & \textbf{21.03} & 14.14 \\
Unqualified Manip. & 19.04$\pm$0.64 & 21.36$\pm$0.86 & 10.07$\pm$1.39 & 25.67$\pm$3.60 & \textbf{29.11} & 47.03 \\
\bottomrule
\end{tabularx}
\end{table}

\paragraph{Negative correlation.}
We also consider an uncertainty set with
\[
\mathbf{F} =
\begin{bmatrix}
0.9 & 0.3\\
0.3 & 0.6
\end{bmatrix},\quad
\mathbf{h} = [5,5],\quad
\mathbf{G} =
\begin{bmatrix}
-1.5 & -0.9\\
-0.6 & -1.5
\end{bmatrix}.
\]
Here, the negative entries of $\mathbf{G}$ imply that increasing a coefficient $\beta_i$ tightens the corresponding uncertainty bound; that is, placing more weight on a feature reduces the feasible range of manipulation for that component. This captures settings in which emphasizing a feature discourages strategic adjustment of the associated attribute. Table~\ref{tab:offdiag_neg} shows that our analysis continues to hold under an off-diagonal cost matrix with negative correlation.

As before, we observe substantially larger second-stage manipulation by unqualified agents and higher total losses relative to the diagonal-cost case (16.12 vs.\ 13.44, 2.30 vs.\ 8.15, and 16.46 vs.\ 27.90), which results in higher total losses for both classifiers (40.44 vs.\ 34.23 and 67.11 vs.\ 49.66). Although there is a slight decrease under the DI classifier in the first stage, the DD classifier continues to achieve lower total losses.

As in the diagonal-cost experiments, the reductions observed under both off-diagonal models arise because the DD classifier anticipates the effect of its decisions, making manipulation costlier and thus more difficult. Figures~\ref{fig:offdiag_pos} and~\ref{fig:offdiag_neg} illustrate this: $\unknownset(\beta^{\text{DI}})$ contains substantially larger values of $\unknown$ than $\unknownset(\beta^{\text{DD}})$, reflecting cheaper manipulation under DI.

\begin{table}[htbp!]
\centering
\caption{Off-diagonal cost matrix with negative correlation.}
\label{tab:offdiag_neg}
\small
\setlength{\tabcolsep}{3pt}
\begin{tabularx}{\textwidth}{l *{6}{>{\centering\arraybackslash}X}}
\toprule
Metric & First-stage DD & First-stage DI & Second-stage DD & Second-stage DI & Total DD & Total DI \\
\midrule
0--1 Loss & 26.36$\pm$1.01 & 24.44$\pm$0.93 & 14.08$\pm$0.85 & 42.67$\pm$4.66 & \textbf{40.44} & 67.11 \\
Manipulations & 27.08$\pm$1.07 & 29.08$\pm$0.85 & 8.64$\pm$0.84 & 31.81$\pm$5.96 & \textbf{35.72} & 60.89 \\
Qualified Manip. & 10.96$\pm$0.66 & 9.32$\pm$0.51 & 0.46$\pm$0.08 & 3.89$\pm$0.64 & \textbf{11.42} & 13.21 \\
Unqualified Manip. & 16.12$\pm$0.72 & 19.64$\pm$0.75 & 8.15$\pm$0.83 & 27.90$\pm$6.29 & \textbf{24.27} & 47.54 \\
\bottomrule
\end{tabularx}
\end{table}

\begin{figure}[t]
\centering
\begin{subfigure}[t]{0.32\linewidth}
    \centering
    \includegraphics[width=\linewidth]{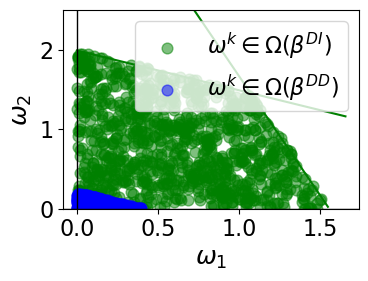}
    \caption{$\Omega(\beta^{\text{DD}})$ vs.\ $\Omega(\beta^{\text{DI}})$ under Positive correlation.}
    \label{fig:offdiag_pos}
\end{subfigure}
\hspace{0.8in}
\begin{subfigure}[t]{0.32\linewidth}
    \centering
    \includegraphics[width=\linewidth]{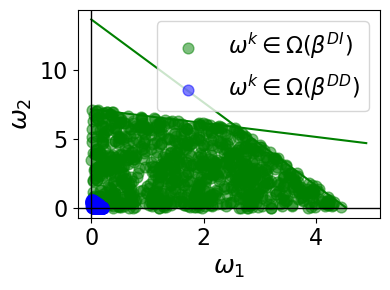}
    \caption{$\Omega(\beta^{\text{DD}})$ vs.\ $\Omega(\beta^{\text{DI}})$ under Negative correlation.}
    \label{fig:offdiag_neg}
\end{subfigure}
\caption{Second-stage decision-dependent uncertainty sets under off-diagonal cost matrices.}
\label{fig:offdiag}
\end{figure}

\subsection{Scalability to higher-dimensional feature spaces}\label{app:scalability}
For the scalability of our method, specifically, the number of feature dimensions affects the number of extreme points in the feasible set of the dual second-stage problem. In the worst case, the iteration complexity can grow exponentially in the number of extreme points of this feasible set. That said, convergence time in practice is often low. We extend the experiment to 3-, 4-, 10-, 30-, and 60-dimensional feature spaces. Table~\ref{tab:high_dim_time} reports the computation time across dimensions, with a maximum training time of 132.38 seconds at 60-dimensional settings. 

\begin{table}[h]
        \centering
        \caption{Computation time of the proposed two-stage robust optimization model with decision-dependent uncertainty across feature-space dimensions (3-60).}
        \label{tab:high_dim_time}
        \begin{tabular}{||c|cccccc||}
        \hline
             Feature dimensions& 2& 3&4&10& 30 & 60 \\\hline
             Time (seconds)& 12.95 &15.6& 16.1 & 30& 94.45 & 132.38\\ \hline
        \end{tabular}
\end{table}

In particular, combining cuts generated via Benders decomposition with C\&CG leads to faster convergence, requiring fewer iterations than pure Benders decomposition. This is because, unlike Benders decomposition, which adds only constraints, the C\&CG component introduces both new columns (recourse decisions) and constraints (subproblem scenarios).

Due to the high computational cost of evaluating true strategic behavior in higher dimensions, we report full experimental results only for the 10-dimensional settings. In contrast, training remains efficient even in higher dimensions. Table~\ref{tab:10D-DD-results} shows that our findings remain consistent as dimensionality increases.
 
\paragraph{Experimental setup for the 10-dimensional settings.}
The first two dimension are kept the same as in the two-dimensional experiment, while for $x_3,\cdots,x_{10} \sim Uniform(0,10)$.  Similarly, Labels are assigned as $y=1$ if $\theta^\top x> 0$ and $y=-1$ otherwise, where $\theta = [0.102 ,-0.347 ,0.251 ,0.314 ,  -0.652 ,-0.435 ,  0.043 , \linebreak[4] -0.106 ,-0.006 ,-0.285]^\top$. 

The students in the first stage can manipulate under the following cost matrix
\begin{align*}
\Sigma_0 \!=\! \operatorname{diag}\Big(
\begin{aligned}
5.15,\, 6.52,\, 4.58,\, 4.52,\, 7.95,
6.20,\, 5.31,\, 4.59,\, 3.49,\, 5.29
\end{aligned}
\Big).
\end{align*}

We are following the same modeling of the second-stage cost matrix as in the two-dimensional settings using the same $g(\unknown) = \frac{1}{\unknown^2}$, and $\Sigma(\unknown) = \operatorname{diag}(g(\unknown)_1, \cdots, g(\unknown_{10}))$. The decision-dependent uncertainty set again fixes $\mathbf{F}$ so $\beta$ affects only the bounds through $\mathbf{G}\beta$. We assume the firm models its uncertainty set as $\unknownset(\beta) = \{ \unknown \in \mathbb{R}^2_+ : \mathbf{F}\unknown \leq \mathbf{h} + \mathbf{G}\beta \}$, with  
\[\mathbf{F} =
\begin{bmatrix}
\mathbf{f}_1^\top \\
\mathbf{f}_2^\top
\end{bmatrix}, \quad
\mathbf{h} = (10,\,10)^\top,
\quad \mathbf{f}_i \in \mathbb{R}^{10},  \text{ and } \begin{bmatrix}
\mathbf{g}_1^\top \\
\mathbf{g}_2^\top
\end{bmatrix}, 
\quad \mathbf{g}_i \in \mathbb{R}^{10},\text{ where}\]  
\begin{align*}
&\mathbf{f}_1 =  \begin{aligned}
(0.3372,\, 0.1766,\, 0.4441,\, 0.1798,\,0.7629,\, 
0.3143,\, 0.5914,\, 0.7974,\,0.7084,\, 0.3197),
\end{aligned}\\
&\mathbf{f}_2 = \begin{aligned}
(0.6875,\, 0.3057,\, 0.6642,\, 0.3447,\, 0.6702,\,
0.8915,\, 0.7391,\, 0.2277,\, 0.4021,\, 0.1841).
\end{aligned}\\
&\mathbf{g}_1 =
\begin{aligned}
(1.7676,\, 0.5272,\, 0.8323,\, 1.4865,\, 0.5928,\,
1.0246,\, 0.9543,\, 1.4439,\, 0.9461,\, 1.7757), 
\end{aligned}\\
&\mathbf{g}_2 =
\begin{aligned}
(1.3427,\, 1.3558,\, 0.5938,\, 1.5645,\, 0.5478,\,
0.7650,\, 1.6331,\, 0.4311,\, 1.0071,\, 0.7519).
\end{aligned}
\end{align*}
 
Performance is measured over $10$ independent instances, each with $100$ new test points sampled from the $10{,}000$-point dataset. Second-stage results are further averaged over $10$ realizations of $\unknown$.

\begin{figure}[htb]
    \centering
    \includegraphics[width=0.35\linewidth,alt={Comparison of 3-dimensional projections of the 10-dimensional uncertainty sets $\Omega(\beta^{\text{DD}})$ and $\Omega(\beta^{\text{DI}})$.}]{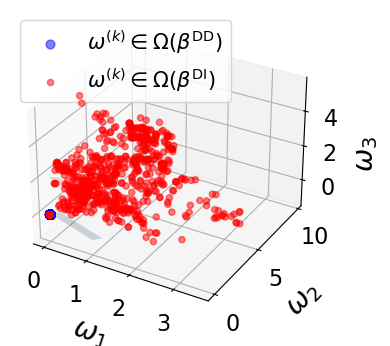}
        \caption{Comparison of 3-dimensional projections of the 10-dimensional uncertainty sets $\Omega(\beta^{\text{DD}})$ and $\Omega(\beta^{\text{DI}})$.}
    \label{fig:10D_DDU}
\end{figure}

As shown in Table~\ref{tab:10D-DD-results}, our analysis still holds. The DD classifier yields significantly lower second-stage loss ($4.37$ vs.\ $25.49$) and manipulations ($0.29$ vs.\ $17.09$) than the DI baseline. This gain arises from anticipating the effect of decisions on manipulation costs, thereby making manipulation more difficult ($\unknown_i < 1,\ \forall i$). As shown in Figure~\ref{fig:10D_DDU}, $\unknownset(\beta^{\text{DD}})$ is strictly smaller than $\unknownset(\beta^{\text{DI}})$, which increases second-stage manipulation costs.

\begin{table}[htb]
\centering
\caption{Average performance (mean $\pm$ standard error) across stages and totals, comparing decision-dependent-aware and decision-dependent-unaware classifiers under a diagonal cost matrix for a 10-dimensional setting.}
\setlength{\tabcolsep}{5pt}
\renewcommand{\arraystretch}{1.2}
\begin{tabular}{||p{2.8cm}|p{2.2cm} p{2.2cm}|p{2.0cm} p{2.2cm}|p{1.2cm} p{1.2cm}||}
\hline
\textbf{Metric} & \textbf{First-stage DD} & \textbf{First-stage DI} & \textbf{Second-stage DD} & \textbf{Second-stage DI} & \textbf{Total DD} & \textbf{Total DI} \\
\hline
0-1 Loss & $38.30 \pm 1.70$ & $26.40 \pm 1.43$ & $4.37 \pm 0.04$ & $25.49 \pm 1.62$ & \textbf{42.67} & 51.89 \\ \hline
Manip. & $33.50 \pm 1.54$ & $26.10 \pm 1.31$ & $0.29 \pm 0.06$ & $17.09 \pm 1.54$ & \textbf{33.79} & 43.19 \\ \hline
Qualified Manip. & $12.40 \pm 1.17$ & $2.60 \pm 0.60$ & $0.20 \pm 0.04$ & $6.32 \pm 0.62$ & \textbf{12.24} & 8.92 \\ \hline
Unqual. Manip. & $20.40 \pm 1.17$ & $22.80 \pm 1.38
$ & $0.05 \pm 0.02$ & $10.39 \pm 1.01$ & \textbf{20.45} & 33.19\\ 
\hline
\end{tabular}
\label{tab:10D-DD-results}
\end{table}

As shown in Table~\ref{tab:10D-DD-results}, our analysis still holds. The DD classifier yields significantly lower second-stage loss ($4.37$ vs.\ $25.49$) and manipulations ($0.29$ vs.\ $17.09$) than the DI baseline. This gain arises from anticipating the effect of decisions on manipulation costs, thereby making manipulation more difficult ($\unknown_i < 1,\ \forall i$). As shown in Figure~\ref{fig:10D_DDU}, $\unknownset(\beta^{\text{DD}})$ is strictly smaller than $\unknownset(\beta^{\text{DI}})$, which increases second-stage manipulation costs. 

\subsection{Sub-optimality gap analysis}\label{app:subopy_compu} 

We have conducted four settings that isolate the impact of key approximations and relaxations in both the first- and second-stage problems. We also vary the McCormick envelope upper bounds to assess their effect on the optimality gap. All experiments were obtained under the same set-up as the main experiment in Section~\ref{sec:7_NE}, specifically, the same first-stage cost specifications and uncertainty sets.

Table~\ref{tab:bounds_gap_time} in the main text summarizes the bounds, gaps, and runtimes; here we provide additional discussion and interpretation of these results. The fully linearized formulation yields an average gap of 24.5\%. Using a nonlinear first-stage with a dual-norm approximation in the second stage reduces the gap to 19.75\%, while removing the dual-norm approximation in the first stage further reduces the gap to 10.75\%. This indicates how reducing the approximation and linearization in the first stage reduces the optimality gap and achieves a better solution. It is also worth noting that for this formulation of the nonlinear first-stage, the computational time slightly increases compared to the fully linearized model (11.76 s vs.\ 13.95 s and 14.63 s). This is because this has been solved using the same solution approach as the linearized model (``Benders C\&CG).  Finally, when both stages are nonlinear and only the second stage uses the dual-norm approximation, the gap nearly vanishes, albeit at a significantly higher computational cost (approximately 3 hours using brute-force search).

Moreover, the choice of McCormick envelope upper bounds has a significant but non-monotonic impact on the optimality gap. For instance, under upper bounds of $(1, 0.5)$, the gaps are 8\%, 5\%, and 2\% for the fully linearized, dual-norm approximated, and exact nonlinear models, respectively. However, tighter bounds do not necessarily lead to improved performance; for example, under $(0.2, 0.2)$, the gap increases to 25\%, 23\%, and 20\%, respectively. Similarly, under looser bounds of $(2, 1)$, the gaps increase to 44\%, 35\%, and 10\%. These results highlight that the effectiveness of the bounds depends critically on their calibration, and careful tuning is required to achieve high-quality solutions.

In particular, these upper bounds in the McCormick envelope are for $t^q_2$ and $t^{\unknown}_2$, denoted by $t^q_{2,\max}$ and $t^\unknown_{2,\max}$, respectively. Intuitively, $\|\Sigma(\unknown)^{-\frac{1}{2}}\|_\infty$ characterizes the most cost-efficient direction of manipulation, while $\|\beta'\|_\infty$ measures the sensitivity of the classifier; their product therefore captures the worst-case strategic impact on the firm's loss. Therefore, these bounds are critical and are typically derived from domain knowledge for obtaining a tight linear approximation.

On this hardware, training the decision-dependent classifier on the 10{,}000-point dataset required approximately an average of 13.5 seconds per run. By contrast, solving the brute-force benchmark to validate our approximation required about 17,568 seconds (4.88 hours). Therefore, it is computationally feasible to provide this analysis for 13 ($(3 \times 4)+1$) representative experiments rather than for hundreds or thousands. 

\begin{figure}[htpb]
    \centering
    \includegraphics[width=0.8\linewidth,alt={Performance Comparison: True dependence aware (DD-True) and our dependence aware (DD-App) in the first stage, second stage, and total.}]{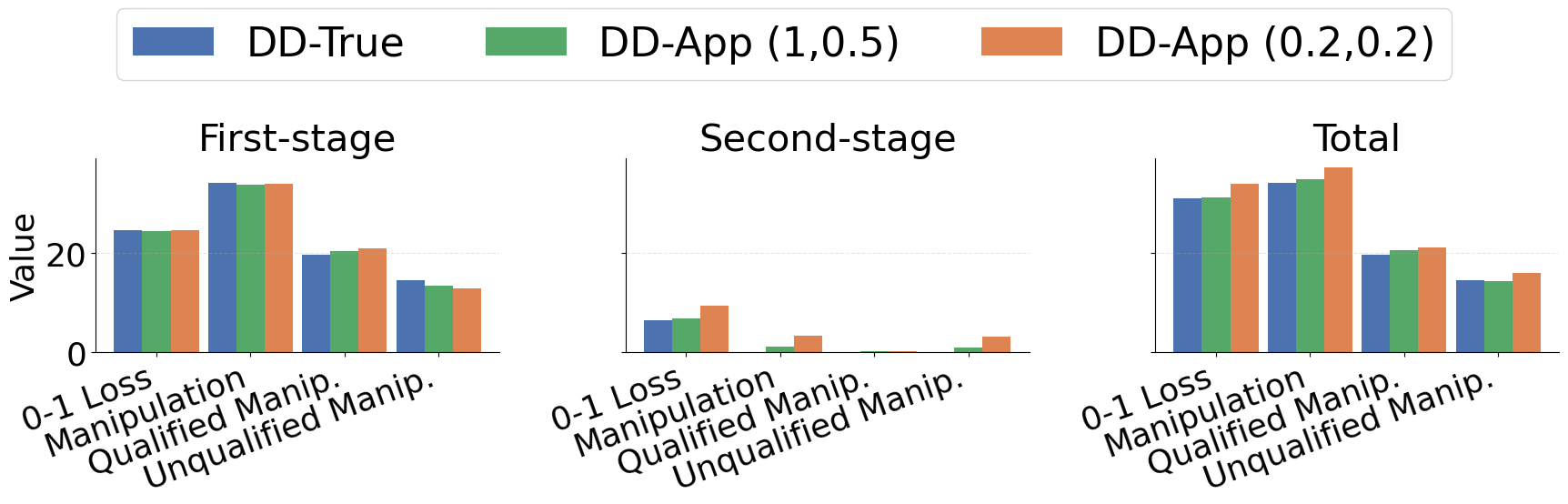}
    \caption{Performance Comparison: True dependence aware (DD-True) and our dependence aware (DD-App).} 
    \label{fig:DD_true_vs_ours}
\end{figure}

\subsection{Impacts of approximations on reducing uncertainty and curbing manipulations}\label{app:true_vs_app_perfo}

The optimality gap analysis focused on differences in the overall objective function. Here, we further highlight how our approximate classifier fares against the optimal solution in terms of reducing uncertainty and curbing manipulations.  

To this end, we contrast the approximate dependence-aware classifier (DD-App), obtained via our proposed method and under two different McCormick envelope upper bounds, specifically, (1, 0.5) and (0.2, 0.2), against the true optimal dependence-aware classifier (DD-True), computed via brute-force search for the original problem in \eqref{eq:tri_model_2_RO_DD_convex}. All experiments use the same experimental set-up as the main experiment in Section~\ref{sec:7_NE}.

As shown in Figure~\ref{fig:DD_true_vs_ours}, all classifiers exhibit nearly identical performance in the first stage. However, DD-True achieves slightly better overall performance, primarily due to its superior second-stage outcomes, namely, lower loss and zero manipulation. This suggests that DD-True is more robust to future strategic behavior and better accounts for downstream effects. It is worth noting that, under an appropriate choice of McCormick envelope upper bounds, specifically (1, 0.5), the classifier achieves performance nearly identical to DD-True, further highlighting the importance of careful parameter selection. 

\begin{figure}[htb]
    \centering
    \includegraphics[width=0.35\linewidth]{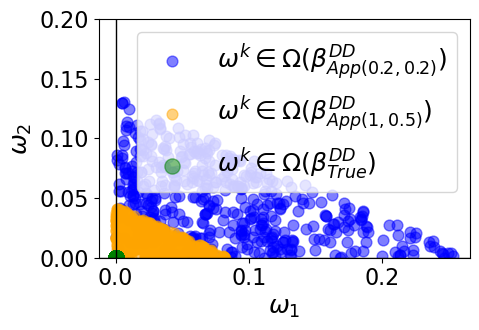}
    \caption{Comparison of uncertainty sets $\Omega(\beta^{\text{DD}}_{\text{True}})$ and $\Omega(\beta^{\text{DD}}_{\text{App}})$.}
    \label{fig:DDU_true_app}
\end{figure}

This observation is consistent with the stricter uncertainty set for the DD-True classifier illustrated in Figure~\ref{fig:DDU_true_app}. Importantly, the overall performance gap remains small, indicating that our approximation is highly effective in learning a classifier that captures the impact of its decisions on future manipulation costs and strategic responses. 

\subsection{Additional numerical experiment on synthetic datasets}\label{app:synthetic_exp}
We generate a synthetic dataset with two-dimensional features $x \in \mathbb{R}^2$, sampled uniformly from $[-10,10]$. Labels are assigned according to the linear boundary $x_1 + 5x_2 = 2$; specifically, $y=1$ if $x_1 + 5x_2 > 2$, and $y=-1$ otherwise. To model noisy environments, labels are flipped according to \emph{feature-dependent noise} 
with probability $\mathbb{P}_{\text{noise}}(x) = 0.5 \exp\left(-\Big(\tfrac{x_1+5x_2}{5}\Big)^2\right),$ so that points closer to the boundary are more likely to be mislabeled. The dataset consists of $5000$ points. 

Agents receiving negative classification outcomes may strategically manipulate their features to achieve a positive outcome, gaining benefit $u=1$ in both stages. The manipulation cost is defined via the $\ell_2$-norm with a non-decreasing function $\phi(r)=0.1r$. 

\paragraph{First-stage cost.} 
The cost matrix is fixed and given by $\Sigma_0 = \text{diag}(2,5)$, where manipulation of feature $x_2$ is $2.5$ times more expensive than manipulating $x_1$. The corresponding cost function is:
\begin{align*}\label{eq:NE_FS_cost}
c(x,\hat{x}) = \phi\big(\|\Sigma^{1/2}(\hat{x}-x)\|_2\big) 
= 0.1\|\Sigma_0^{1/2}(\hat{x}-x)\|_2.
\end{align*}

\paragraph{Decision-dependent uncertainty.}
To construct the decision-dependent uncertainty set, we fix the interaction matrix $\mathbf{F}$ to be constant and independent of the first-stage decision $\beta$. As discussed in Section~\ref{sec:model}, this choice ensures that the first-stage decision $\beta$ only influences the \emph{bounds} of the uncertainty set via $\mathbf{G}\beta$, rather than altering the relative interaction between $\unknown_1$ and $\unknown_2$. 
The resulting decision-dependent uncertainty set is therefore $\unknownset(\beta) = \{ \unknown \in \mathbb{R}^2_+ : \mathbf{F}\unknown \leq \mathbf{h} + \mathbf{G}\beta \},$ with
\[
\mathbf{F} = \begin{bmatrix} 1 & 2 \\ 3 & 1 \end{bmatrix}, \quad
\mathbf{h} = \begin{bmatrix} 1 \\ 1 \end{bmatrix}, \quad
\mathbf{G} = \begin{bmatrix} 5 & 2 \\ 1 & 5 \end{bmatrix}.
\]
The diagonal entries of $\mathbf{G}$ are chosen larger than the off-diagonal entries, reflecting that $\beta_i$ primarily governs the cost of manipulating its corresponding feature $x_i$, rather than $x_j$ with $i \neq j$. Finally, we set $\mathbf{h} = (1,1)^\top$ to capture average manipulation cost, which in the university admission analogy corresponds to the average expense of accessing cheating resources (e.g., buying leaked exams).

For instance, $\unknown_1+2\unknown_2 \leq 1+5\beta_1+2\beta_2$ and $3\unknown_1+\unknown_2 \leq 1+\beta_1+5\beta_2$,  where the right-hand sides directly scale with $\beta$. Thus, higher positive weights on particular features expand the feasible set of manipulation costs, reflecting that strategic agents can more easily exploit heavily weighted features.

\paragraph{Second-stage cost.}
The second-stage classifier faces the cost
\(c(x,\hat{x}) = 0.1\|\Sigma(\unknown)^{1/2}(\hat{x}-x)\|_2,\) where
\(\Sigma(\unknown) = \text{diag}(g(\unknown_1), g(\unknown_2)) \cdot \Sigma, 
\quad g(\unknown) = \tfrac{1}{\unknown^2}.\)
Smaller $\unknown_i$ increase $g(\unknown_i)$, raising the manipulation cost of feature $i$. Together with $\unknownset(\beta)$, if $\beta_i>0$ (feature valued positively by the classifier), the greater the $\beta_i$, the $\unknown_i$ has a larger feasible upper bound, making it cheaper to manipulate when $\unknown_i>1$. Conversely, if $\beta_i<0$, the greater the $\beta_i$ in the negative, the $\unknown_i$ tends to have a smaller upper bound, increasing the manipulation cost. 

In our experiment, we are not restricting the value of $\beta$ and $\beta'$, meaning we don't have any of $\mathbf{A \beta\geq b}$, and $\mathbf{B_2\beta' \geq d -B_2\beta-E\unknown}$. 
   
We implemented the Benders C\&CG algorithm using Python and GurobiPy. Results show that the decision-dependent (DD) classifier alters manipulation costs in the second stage in a way that improves accuracy compared to its decision-independent (DI) counterpart.

\begin{table*}[htbp]
\centering
\caption{Average $\pm$ standard error across stages and totals.}
\begin{center}
\small
\setlength{\tabcolsep}{5pt}
\renewcommand{\arraystretch}{1.2}
\begin{tabular}{lcccccc}
\toprule
Metric & First-stage DD & First-stage DI & Second-stage DD & Second-stage DI & Total DD & Total DI \\
\midrule
0-1 Loss & $25.36 \pm 1.01$ & $22.80 \pm 0.91$ & $5.22 \pm 0.21$ & $43.92 \pm 3.92$ & \textbf{30.58} & 66.72 \\
Manipulations & $25.24 \pm 1.02$ & $22.80 \pm 0.78$ & $1.42 \pm 0.45$ & $43.66 \pm 3.91$ & \textbf{26.66} & 66.46 \\
{Qualified Manip.} & $2.80 \pm 0.42$ & $3.80 \pm 0.35$ & $0.60 \pm 0.21$ & $8.53 \pm 2.17$ & \textbf{3.40} & 12.33 \\
{Unqualified Manip.} & $22.44 \pm 0.94$ & $18.92 \pm 0.69$ & $0.82 \pm 0.24$ & $35.11 \pm 3.58$ & \textbf{23.26} & 54.03 \\
\bottomrule
\end{tabular}
\label{tab:DD-results_syn}
\end{center}
\end{table*}
As reported in Table~\ref{tab:DD-results_syn}, the second-stage average total 0–1 loss for the DD classifier is significantly lower than that of the DI baseline. This stems from the DD classifier’s design, which explicitly considers how first-stage decisions affect later manipulation costs. Specifically, Concretely, the average second-stage 0–1 loss under DD is $5.22$, compared to $43.92$ for DI. Similarly, the average total manipulations are $1.42$ under DD and $43.66$ under DI. Among manipulated agents, the DD classifier accepts only $0.6$ qualified and $0.82$ unqualified individuals, versus $8.53$ and $35.11$ under DI. 

This reduction occurs because the DD classifier anticipates the effect of its decisions, making manipulation costlier, thereby making manipulation more difficult. Figure~\ref{fig:all_omega_syn} illustrates this: $\unknownset(\beta^{\text{DI}})$ contains much larger values of $\unknown$ than $\unknownset(\beta^{\text{DD}})$, reflecting cheaper manipulation under DI. 
Moreover, Figure~\ref{fig:dd_omega_syn} shows that $\unknown \in \unknownset(\beta^{\text{DD}})$ is always strictly less than 1, indicating that the second-stage manipulation cost is strictly higher than in the first stage.

\begin{figure}[htbp]
    \centering
    \begin{subfigure}{0.32\linewidth}
        \centering
        \includegraphics[width=\linewidth]{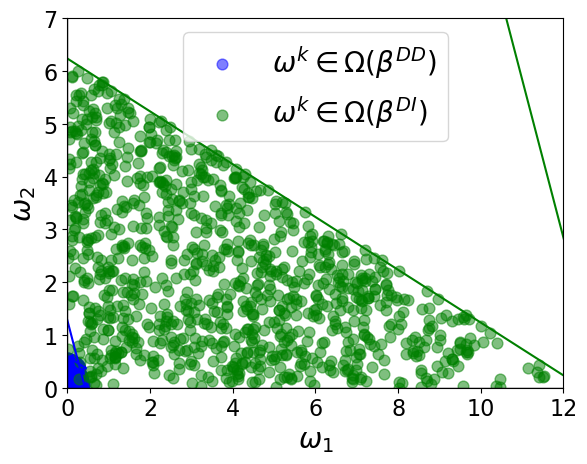}
        \caption{Uncertainty sets $\Omega(\beta^{\text{DD}})$ vs.\ $\Omega(\beta^{\text{DI}})$}
        \label{fig:all_omega_syn}
    \end{subfigure}
    \hspace{0.8in}
    \begin{subfigure}{0.32\linewidth}
        \centering
        \includegraphics[width=\linewidth]{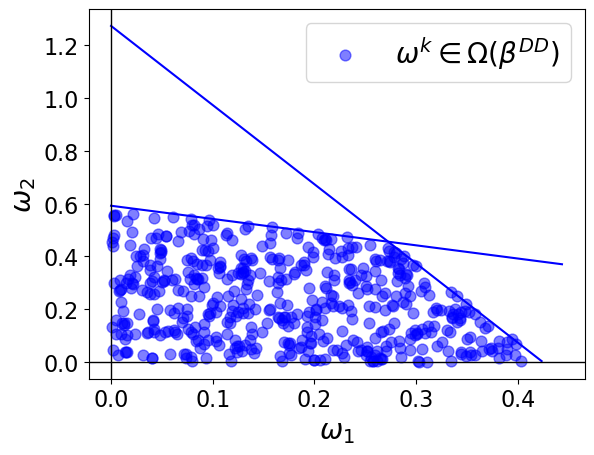}
        \caption{Decision-dependent set $\Omega(\beta^{\text{DD}})$}
        \label{fig:dd_omega_syn}
    \end{subfigure}
    \caption{Second-stage decision-dependent uncertainty sets.}
    \label{fig:second_stage_sets_syn}
\end{figure}

Thus, it both limits the reduction in second-stage costs and ensures manipulation remains consistently expensive. 

In contrast, Table~\ref{tab:DD-results_syn} also reveals that the DD classifier performs worse in the first stage. Its average 0–1 loss is $25.36$, compared to $22.80$ for DI classifier. This trade-off arises because the DD classifier deliberately sacrifices first-stage accuracy to mitigate second-stage manipulation, whereas the DI classifier—being unaware—optimizes solely for immediate outcomes. For instance, under DD, the average total manipulations reach $25.24$ ($22.44$ unqualified and $2.8$ qualified agents), whereas DI yields $22.80$ manipulations ($18.92$ unqualified and $3.80$ qualified).

Overall, the dependency-aware model achieves lower averages across both stages compared to the unaware model. Specifically, the total 0–1 loss drops to $30.58$ under DD, versus $66.72$ under DI. Similarly, overall manipulations are reduced to $26.66$, compared to $66.46$ for DI. These results confirm that incorporating decision-dependent manipulation costs effectively lowers both classification error and manipulation rates.

\end{document}